\newtheorem{theorem}{Theorem}[section]
\newtheorem{lemma}[theorem]{Lemma}
\newtheorem{definition}[theorem]{Definition}
\newtheorem{assumption}[theorem]{Assumption}
\DeclareMathOperator*{\argmax}{arg\,max}
\newcommand{\Y}{\mathcal{Y}}
\icmltitlerunning{Interpreting Blackbox Models via Model Extraction}
\begin{document}

\twocolumn[
\icmltitle{Interpreting Blackbox Models via Model Extraction}




\begin{icmlauthorlist}
\icmlauthor{Osbert Bastani}{a}
\icmlauthor{Carolyn Kim}{b}
\icmlauthor{Hamsa Bastani}{a}
\end{icmlauthorlist}

\icmlaffiliation{a}{University of Pennsylvania}
\icmlaffiliation{b}{Stanford University}

\icmlcorrespondingauthor{Osbert Bastani}{obastani@seas.upenn.edu}


\vskip 0.3in
]



\printAffiliationsAndNotice{}  

\begin{abstract}
  Interpretability has become incredibly important as machine learning is increasingly used to inform consequential decisions. We propose to construct \emph{global explanations} of complex, blackbox models in the form of a decision tree approximating the original model---as long as the decision tree is a good approximation, then it mirrors the computation performed by the blackbox model. We devise a novel algorithm for extracting decision tree explanations that actively samples new training points to avoid overfitting. We evaluate our algorithm on a random forest to predict diabetes risk and a learned controller for cart-pole. Compared to several baselines, our decision trees are both substantially more accurate and equally or more interpretable based on a user study. Finally, we describe several insights provided by our interpretations, including a causal issue validated by a physician.
\end{abstract}

\section{Introduction}

Machine learning has revolutionized our ability to use data to inform critical decisions, such as medical diagnosis~\cite{diagnosis2001, caruana2015intelligible, valdes2016mediboost}, bail decisions for defendants~\cite{bail2017, jung2017} and the design of aircraft collision avoidance systems~\cite{collision2010}. At the same time, machine learning models have been shown to exhibit unexpected defects when deployed in the real world, such as causality issues (i.e., inability to distinguish causal effects from correlations)~\cite{pearl2009, caruana2015intelligible}, fairness (i.e., internalizing prejudices present in training data)~\cite{dwork2012fairness, fairness2016}, and covariate shift (i.e., differences in the training and test distributions)~\cite{covariateshift2000, covariateshift2009}.

Interpretability is a promising way to address these challenges~\cite{rudin2014algorithms,doshi2017towards}, since it enables data scientists to diagnose issues in machine learning models~\cite{caruana2012intelligible,wang2015falling,letham2015interpretable,slim,lime}. There are three approaches to interpretability. First, we can deploy an interpretable model such as a decision tree or a rule list in production~\cite{caruana2012intelligible,wang2015falling,letham2015interpretable,slim}, allowing the data scientist to validate the deployed model; however, this approach often requires sacrificing accuracy. Alternatively, we can use a complex model in production, but provide a \emph{local explanation} for each of its predictions~\cite{lime}; however, the data scientist must now validate each prediction made. In contrast, our approach is to extract a \emph{global explanation} in the form of an interpretable model that approximates the complex model; as long as the approximation quality is good, then the interpretable model mirrors the computation performed by the complex model. Thus, by inspecting the interpretable model, the data scientist can diagnose issues in the complex model. To maximize applicability, we treat the complex model as a \emph{blackbox}, i.e., we only require the ability to run the model on a chosen input and observe the corresponding output.

We use decision trees as global explanations, since they are nonparametric (so they can closely approximate complex models) but highly structured (so they are interpretable). The challenge is that decision trees are hard to learn~\cite{frosst2017distilling}. We propose a \emph{model extraction} algorithm for learning decision trees---to avoid overfitting, our algorithm generates new training data by actively sampling new inputs and labeling them using the complex model. We evaluate our algorithm on two benchmarks: a random forest trained to predict diabetes risk and a control policy for cart-pole~\cite{cartpole_problem}. We find that our decision trees are substantially more accurate (relative to the complex model) compared to several baselines. An important question is whether decision trees are interpretable. A key contribution of our work is a user study evaluating interpretability by asking machine learning graduate students to perform tasks such as computing counterfactuals and identifying risky subpopulations; we find that our decision trees are equally or more interpretable compared to the baselines. Finally, we describe several insights based on our interpretations, including a causal issue validated by a physician.

\paragraph{Related work.}

There has been much interest in learning interpretable models, including decision trees~\cite{CART}, rule lists~\cite{wang2015falling,letham2015interpretable}, sparse linear models~\cite{tibshirani1996regression,slim,jung2017}, generalized additive models~\cite{caruana2012intelligible}, and decision sets~\cite{lakkaraju2016interpretable}. There has been work using model compression~\cite{bucilua2006model} to learn decision trees~\cite{breiman1996born,frosst2017distilling}, but they use rejection sampling, whereas our active sampling strategy directly targets paths most in need of additional data, thereby substantially improving accuracy. There have also been approaches focused on extracting decision trees from specific model families such as random forests~\cite{van2007seeing,deng2014interpreting,vandewiele2016genesim}; in contrast, our approach is fully blackbox, enabling it to work with any model family. There has been work on constructing global explanations: \emph{relative influence} scores the contribution of each feature in random forests~\cite{friedman2001greedy}, and~\cite{datta2016algorithmic} uses the Shapley value to quantify feature influence. In recent work, \cite{lakkaraju2017interpretable} extracts global explanations in the form of decision sets; we show that our decision trees are equally interpretable while achieving higher accuracy relative to the complex model.

\section{Problem Formulation}
\label{sec:problem}

Our algorithm learns axis-aligned decision trees~\cite{CART}. An \emph{axis-aligned constraint} is a constraint $C=(x_i\le t)$, where $i\in[d]=\{1,...,d\}$ and $t\in\mathbb{R}$, where $d$ is the dimension of the input space $\mathcal{X}$. More general constraints can be built from existing constraints using negations $\neg C$, conjunctions $C_1\wedge C_2$, and disjunctions $C_1\vee C_2$. The \emph{feasible set} of $C$ is $\mathcal{F}(C)=\{x\in\mathcal{X}\mid x\text{ satisfies }C\}$.

A \emph{decision tree} $T$ is a binary tree. An \emph{internal node} $N=(N_L,N_R,C)$ of $T$ has a left child node $N_L$ and a right child node $N_R$, and is labeled with an axis-aligned constraint $C=(x_i\le t)$. A \emph{leaf node} $N=(y)$ of $T$ is associated with a label $y\in\mathcal{Y}$. We use $N_T$ to denote the root node of $T$. The decision tree is interpreted as a function $T:\mathcal{X}\to\mathcal{Y}$ in the usual way. More precisely, a leaf node $N=(y)$ is interpreted as a function $N(x)=y$, an internal node $N=(N_L,N_R,C)$ is interpreted as a function $N(x)=N_L(x)$ if $x\in\mathcal{F}(C)$, and $N(x)=N_R(x)$ otherwise. Then, $T(x)=N_T(x)$. For a node $N\in T$, we let $C_N$ denote the conjunction of the constraints along the path from the root of $T$ to $N$. More precisely, $C_N$ is defined recursively: for the root $N_T$, we have $C_{N_T}=\text{True}$, and for an internal node $N=(N_L,N_R,C)$, we have $C_{N_L}=C_N\wedge C$ and $C_{N_R}=C_N\wedge\neg C$.

Given a training set $X_{\text{train}}\subseteq\mathcal{X}$ and blackbox access to a function $f:\mathcal{X}\to\mathcal{Y}$, our goal is to learn a decision tree $T:\mathcal{X}\to\mathcal{Y}$ that approximates $f$. We focus on the case $\mathcal{X}=\mathbb{R}^d$ and $\mathcal{Y}=[m]$ (i.e., classification); our approach easily generalizes to the case where $\mathcal{X}$ contains categorical dimensions, and to $\mathcal{Y}=\mathbb{R}$ (i.e., regression). For classification, we measure performance using accuracy relative to $f$ on a held out test set, i.e., $\frac{1}{|X_{\text{test}}|}\sum_{x\in X_{\text{test}}}\mathbb{I}[T(x)=f(x)]$. For binary classification, we use $F_1$ score, and for regression, we use mean-squared error.

\begin{table*}
\small
\centering
\resizebox{\textwidth}{!}{
\begin{tabular}{llrlrrll}
\toprule
\multicolumn{1}{c}{{\bf Dataset}} &
\multicolumn{1}{c}{{\bf Task}} &
\multicolumn{1}{c}{{\bf \# Features}} &
\multicolumn{1}{c}{{\bf Outcomes}} &
\multicolumn{1}{c}{{\bf \# Training}} &
\multicolumn{1}{c}{{\bf \# Test}} &
\multicolumn{1}{c}{{\bf Blackbox Model}} &
\multicolumn{1}{c}{{\bf Blackbox Performance}} \\
\hline
diabetes risk & classification & 384 & $\{\text{high risk},~\text{low risk}\}$ & 404 & 174 & random forest & $F_1=$~0.24  \\
cart-pole~\cite{cartpole_problem} & reinforcement learning & 4 & $\{\text{left},~\text{right}\}$ & 100 & 100 & control policy & $\text{reward}=$~200.0 \\
\bottomrule
\end{tabular}
}
\caption{Summary of the datasets used in our evaluation.}
\label{tab:summary}
\end{table*}

\section{Decision Tree Extraction Algorithm}
\label{sec:algorithm}

Our algorithm first uses $X_{\text{train}}$ to estimate a distribution $\mathcal{P}$ over $\mathcal{X}$. For scalability, our algorithm greedily constructs the decision tree $T$: it initializes $T$ to a single leaf (the root), and then iteratively splits leaf nodes in $T$. To split a leaf $N\in T$, it uses an active sampling strategy to sample new inputs $x\sim\mathcal{P}$ such that $x\in\mathcal{F}(C_N)$, computes the corresponding labels $y=f(x)$, and uses this data to identify the best split. We first describe the \emph{exact greedy decision tree} $T^*$, i.e., the decision tree extracted using infinite data, and then describe how our algorithm estimates $T^*$.

\paragraph{Input distribution.}

We fit a mixture of axis-aligned Gaussians to $X_{\text{train}}$ using EM. The categorical distribution over mixtures is $j\sim\text{Categorical}(\phi)$ (where $\phi\in\mathbb{R}^k$), and the mixture distributions are $x\sim\mathcal{N}(\mu_j,\Sigma_j)$ for each $j\in[K]$ (where $\mu\in\mathbb{R}^{Kd}$ and $\Sigma\in\mathbb{R}^{Kd^2}$, and each $\Sigma_j$ is diagonal).


\paragraph{Exact greedy decision tree.}

We construct the \emph{exact greedy decision tree} $T^*$ of size $k$ similar to CART~\cite{CART}. We initialize $T^*$ to a single leaf $N_{T^*}=(y)$, where $y$ is the majority label according to $\mathcal{P}$. Then, we iteratively split leaves in $T^*$ (using a total of $k-1$ iterations)---at each iteration, we choose a leaf $N=(y)$ in $T^*$ and replace it with an internal node $N'=(N_L,N_R,C)$, where $N_L=(y_L)$ and $N_R=(y_R)$ are new leaf nodes, and $C=(x_{i^*}\le t^*)$, where
\begin{align}
\label{eqn:exbranch}
(i^*,t^*)=\argmax_{i\in[d],t\in\mathbb{R}}G(i,t),
\end{align}
where the gain
\begin{align}
\label{eqn:gain}
G(i,t)=&~-H(f,C_N\wedge(x_i\le t))\\
&~-H(f,C_N\wedge(x_i>t)) + H(f,C_N) \nonumber \\
H(f,C)=&\bigg(1-\sum_{y\in\mathcal{Y}}\text{Pr}_{x\sim\mathcal{P}}[f(x)=y\mid C]^2\bigg)\cdot\text{Pr}_{x\sim\mathcal{P}}[C] \nonumber
\end{align}
uses the (weighted) Gini impurity $H$ (other metrics can be used as well). The leaf node labels are
\begin{align}
\label{eqn:exlabel}
y_L&=\argmax_{y\in\mathcal{Y}}\text{Pr}_{x\sim\mathcal{P}}[f(x)=y\mid C_N\wedge(x_i\le t)] \\
y_R&=\argmax_{y\in\mathcal{Y}}\text{Pr}_{x\sim\mathcal{P}}[f(x)=y\mid C_N\wedge(x_i>t)]. \nonumber
\end{align}
We choose to replace the leaf $N\in T^*$ with the highest gain (\ref{eqn:gain}); we terminate early if its gain is zero.


\paragraph{Estimated greedy decision tree.}

Given $n\in\mathbb{N}$, our algorithm constructs a greedy decision tree $\hat{T}$ in the same way as the construction of $T^*$, except (\ref{eqn:gain}) and (\ref{eqn:exlabel}) are estimated using $n$ i.i.d. samples $x\sim\mathcal{P}\mid C_N$ each (we select which leaf $N\in T^*$ to expand using $n$ additional samples). We describe how to sample $x\sim\mathcal{P}\mid C$, where $C$ is a conjunction of axis-aligned constraints
\begin{align*}
C=&(x_{i_1}\le t_1)\wedge...\wedge(x_{i_k}\le t_k) \\
&\wedge(x_{j_1}>s_1)\wedge...\wedge(x_{j_h}>s_h).
\end{align*}
Constraints in $C$ may be redundant---(i) for two constraints $x_i\le t$ and $x_i\le t'$ such that $t\le t'$, the first constraint implies the second, so we can discard the latter, and (ii) for two constraints $x_i>s$ and $x_i>s'$ such that $s\ge s'$, we can similarly discard the latter. Given two constraints $x_i\le t$ and $x_i>s$, we can assume that $t\ge s$ (otherwise $C$ is unsatisfiable, so the gain (\ref{eqn:gain}) is zero and the algorithm terminates). In summary, we can assume $C$ contains at most one inequality $(x_i\le t)$ and at most one inequality $(x_i>s)$ for each $i\in[d]$, and if both are present, then the two are not mutually exclusive. For simplicity, we assume $C$ contains both inequalities for each $i\in[d]$:
\begin{align*}
C=(s_1\le x_1\le t_1)\wedge...\wedge(s_d\le x_d\le t_d).
\end{align*}
Now, recall that $\mathcal{P}$ is a mixture of axis-aligned Gaussians, so it has probability density function
\begin{align*}
p_{\mathcal{P}}(x)&=\sum_{j=1}^K\phi_j\cdot p_{\mathcal{N}(\mu_j,\Sigma_j)}(x) \\
&=\sum_{j=1}^K\phi_j\prod_{i=1}^dp_{\mathcal{N}(\mu_{ji},\sigma_{ji})}(x_i),
\end{align*}
where $\sigma_{ji}=(\Sigma_j)_{ii}$. The conditional distribution is
\begin{align*}
p_{\mathcal{P}\mid C}(x)&\propto\sum_{j=1}^K\phi_j\prod_{i=1}^dp_{\mathcal{N}(\mu_{ji},\sigma_{ji})\mid C}(x_i) \\
&=\sum_{j=1}^K\phi_j\prod_{i=1}^dp_{\mathcal{N}(\mu_{ji},\sigma_{ji})\mid(s_i\le x_i\le t_i)}(x_i).
\end{align*}
Since the Gaussians are axis-aligned, the unnormalized probability of each component is
\begin{align*}
\tilde{\phi}_j'&=\int\phi_j\prod_{i=1}^dp_{\mathcal{N}(\mu_{ji},\sigma_{ji})\mid(s_i\le x_i\le t_i)}(x_i)dx \\
&=\phi_j\prod_{i=1}^d\left(\Phi\left(\frac{t_i-\mu_{ji}}{\sigma_{ji}}\right)-\Phi\left(\frac{s_i-\mu_{ji}}{\sigma_{ji}}\right)\right),
\end{align*}
where $\Phi$ is the cumulative density function of the standard Gaussian distribution $\mathcal{N}(0,1)$. Then, the normalization constant is $Z=\sum_{j=1}^K\tilde{\phi}_j'$, and the component probabilities are $\tilde{\phi}=Z^{-1}\tilde{\phi}'$. Finally, to sample $x\sim\mathcal{P}\mid C$, we sample $j\sim\text{Categorical}(\tilde{\phi})$, and $x_i\sim\mathcal{N}(\mu_{ji},\sigma_{ji})\mid(s_i\le x_i\le t_i)$ (for each $i\in[d]$). We use standard algorithms for sampling truncated Gaussian distributions to sample each $x_i$.
  
\paragraph{Theoretical guarantees.}

We show that $\hat{T}\to T^*$ as $n\to\infty$. A related result is~\cite{domingos2000mining}, but their analysis is limited to discrete features, for which convergence is much easier to analyze. We give proofs in Appendix~\ref{sec:mainresultsappendix}.
\begin{assumption}
\label{assump:p}
The density $p(x)$ of $\mathcal{P}$ is continuous, bounded ($p(x)\le p_{\text{max}}$), and has bounded domain ($p(x)=0$ for $|x|>x_{\text{max}}$).
\end{assumption}
To satisfy this assumption, we can truncate our Gaussian mixture model; this modification does not affect $T^*$ or $\hat{T}$ very much since Gaussians have exponential tails.
\begin{assumption}
\label{assump:unique}
The maximizers $(i^*,t^*)$ in (\ref{eqn:exbranch}), and $y_L$ and $y_R$ in (\ref{eqn:exlabel}) are unique.
\end{assumption}
In other words, there are no nodes where the Gini impurity for two different choices of branch are exactly tied (such a tie is very unlikely in practice); this assumption ensures that $T^*$ is well defined. We now define the notion in which the extracted tree converges to the exact tree. For simplicity, we additionally assume that our trees are complete (i.e., have all nodes up to a given depth $D$).
\begin{definition}
\label{def:exact}
Let $T,T'$ be decision trees. For any $\epsilon>0$, we say $T$ is an $\epsilon$-approximation of $T'$ if $\text{Pr}_{x\sim\mathcal{P}}[T(x)=T'(x)]\ge1-\epsilon$. For any $\epsilon,\delta>0$, we say $T$ is $(\epsilon,\delta)$-exact if $\text{Pr}[T\text{ is an }\epsilon\text{ approximation of }T^*]\ge1-\delta$ (probability over the training samples $x\sim\mathcal{P}$).
\end{definition}
\begin{theorem}
\label{thm:main}
For all $\epsilon,\delta>0$, $\exists n\in\mathbb{N}$ such that $\hat{T}$ extracted using $n$ samples is $(\epsilon,\delta)$-exact.
\end{theorem}

\section{Evaluation}
\label{sec:evaluation}

\begin{figure*}
\centering
\begin{tabular}{cc}
\includegraphics[width=0.45\textwidth]{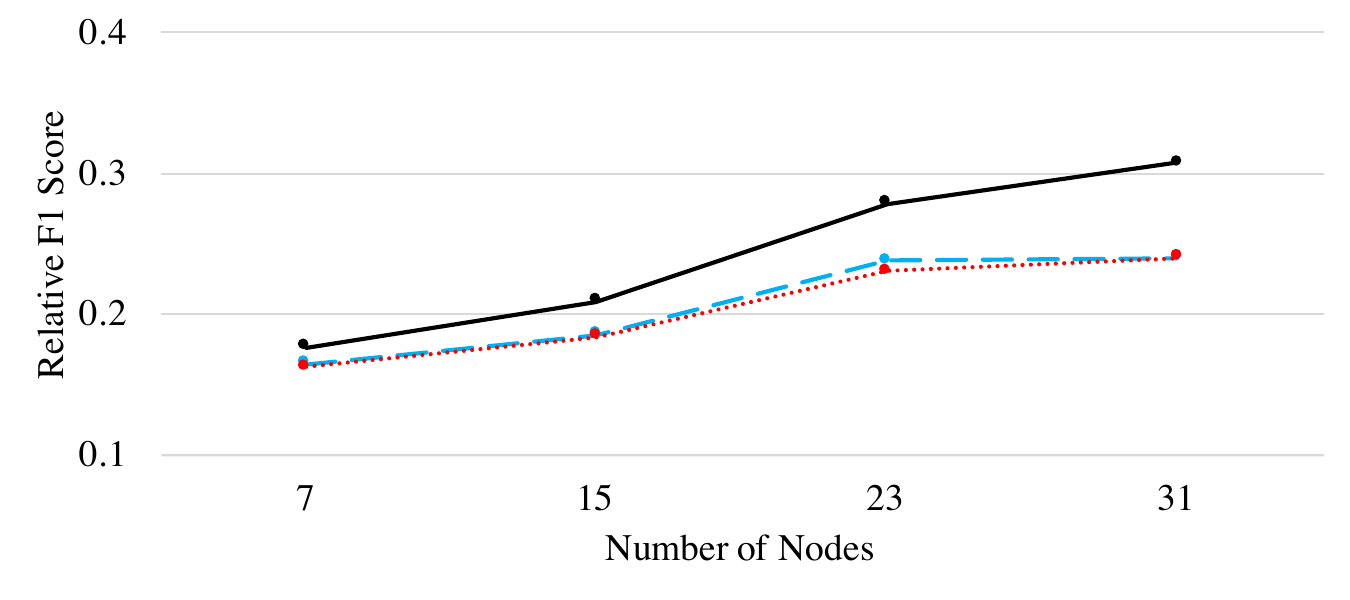} &
\includegraphics[width=0.45\textwidth]{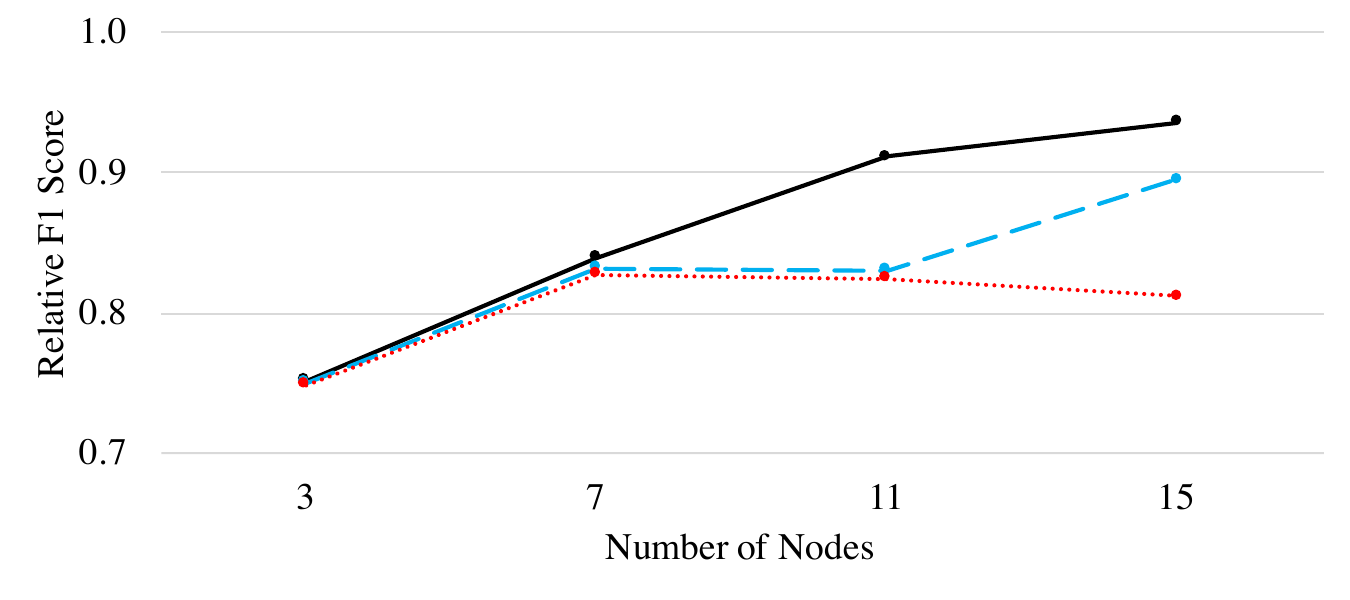} \\
(a) & (b) \\
\includegraphics[width=0.45\textwidth]{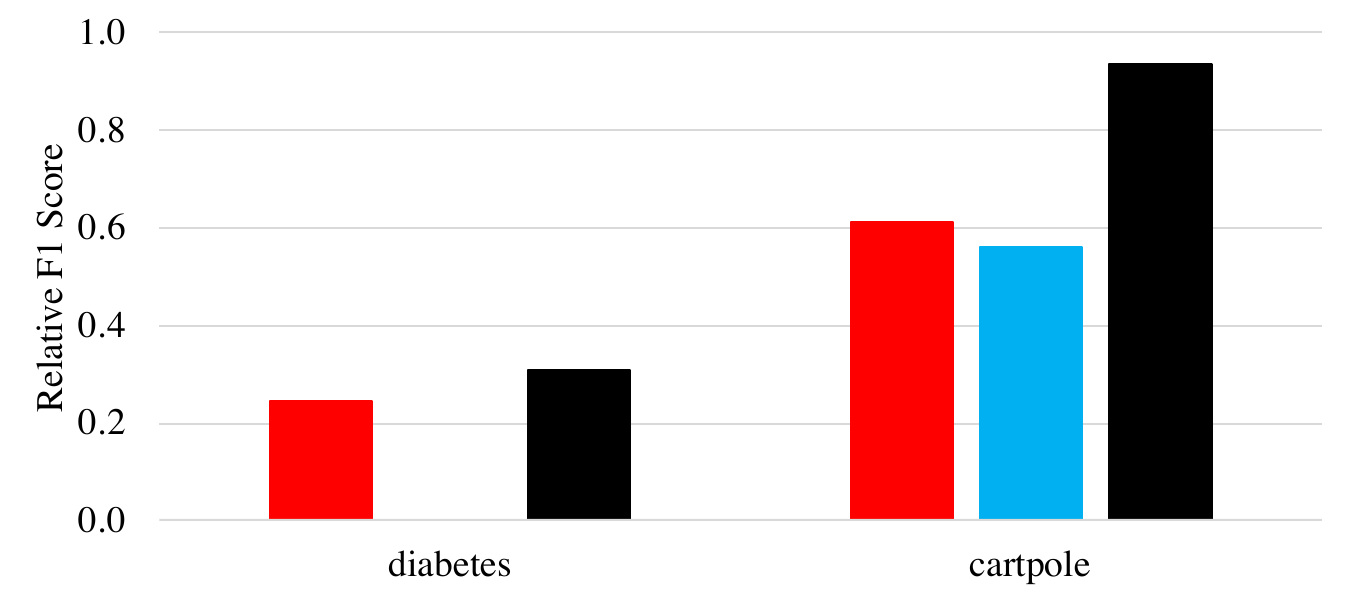} &
\includegraphics[width=0.45\textwidth]{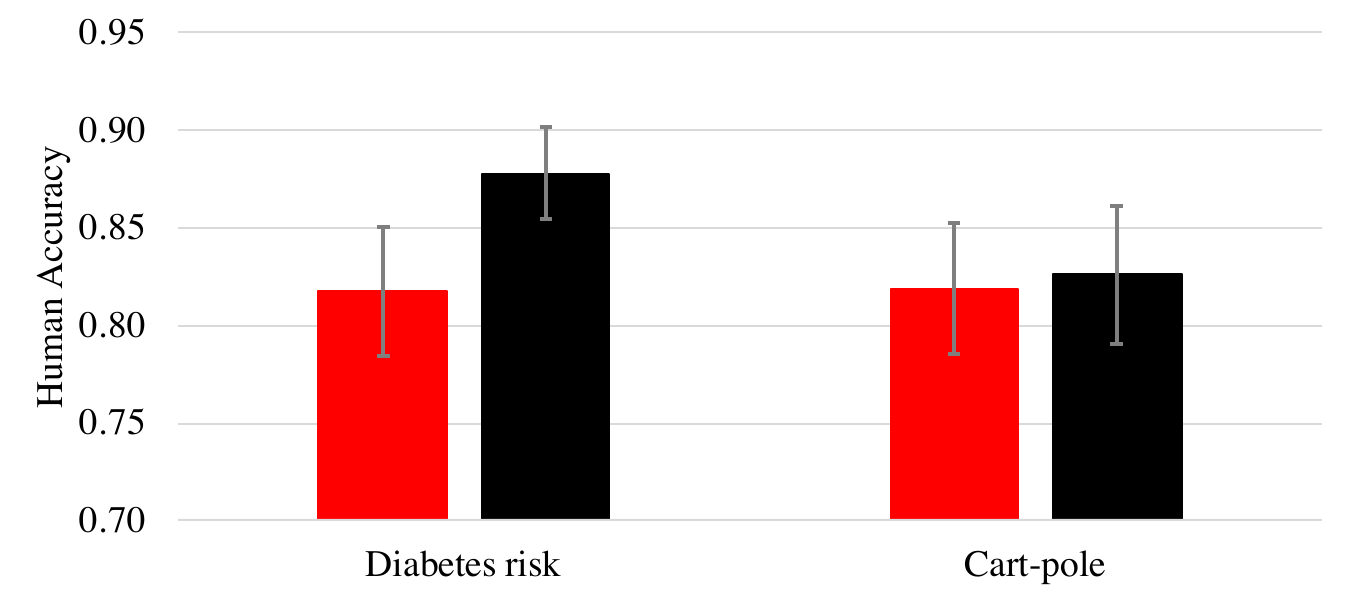} \\
(c) & (d)
\end{tabular}
\caption{Fidelity on (a) the diabetes risk benchmark, and (b) the cart-pole benchmark, of decision trees learned using CART (red, dotted), the born-again algorithm (blue, dashed), and our algorithm (black, solid). (c) Fidelity of rule lists (red), decision sets (blue), and our decision trees (black); the decision set learning algorithm did not scale to the diabetes risk benchmark. (d) User response accuracy for the baseline rule list or decision set (red) and our decision trees (black).}
\label{fig:fidelity}
\end{figure*}

We compare the \emph{fidelity} (i.e., accuracy relative to the complex model) and interpretability of our decision trees to several baselines on two benchmarks (see in Table~\ref{tab:summary}). We also compare the fidelity of our algorithm to CART, a state-of-the-art algorithm, on a number of additional benchmarks.

\paragraph{Diabetes risk.}

The goal of this dataset is to predict whether a patient has high or low risk for type II diabetes, based on their ICD-9 diagnosis codes, prescribed medications, and demographics. In part of our evaluation, we compare models trained on patients from different healthcare providers. Thus, we initially focus on the largest provider (578 patients); later, for comparison, we use another large provider (402 patients). We balance the training set (only 11.8\% of patients have diabetes) and train a random forest to predict risk. We extract decision tree explanations using $n=1000$.

\paragraph{Cart-pole.}

The goal of the cart-pole problem~\cite{cartpole_problem} is to balance a pole on top of a cart. We discretize the state space, estimate the transition probabilities and rewards using random samples, and use value iteration to compute the optimal policy. We extract decision tree explanations using $n=200$ (fewer samples are needed since the dimension of $\mathcal{X}$ is much smaller).

\begin{table*}
\begin{scriptsize}
\begin{center}
\begin{tabular}{llrrlrrr}
\toprule
\multicolumn{1}{c}{{\bf Dataset}} &
\multicolumn{1}{c}{{\bf Task}} &
\multicolumn{1}{c}{{\bf Samples}} &
\multicolumn{1}{c}{{\bf Features}} &
\multicolumn{1}{c}{{\bf Model}} &
\multicolumn{1}{c}{{\bf Score of $f$}} &
\multicolumn{1}{c}{{\bf Fidelity (Ours)}} &
\multicolumn{1}{c}{{\bf Fidelity (CART)}} \\
\hline
breast cancer~\cite{wolberg1990multisurface} & classify & 569 & 32 & forest & 0.966 $F_1$ & {\bf 0.957 $F_1$} & 0.945 $F_1$ \\
breast cancer & classify & 569 & 32 & neural net & 0.951 $F_1$ & {\bf 0.956 $F_1$} & 0.949 $F_1$ \\
dermatology~\cite{guvenir1998learning} & classify & 366 & 34 & forest & 0.994 $F_1$ & {\bf 0.970 $F_1$} & 0.967 $F_1$ \\
dermatology & classify & 366 & 34 & neural net & 0.988 $F_1$ & {\bf 0.997 $F_1$} & 0.964 $F_1$ \\
prostate cancer~\cite{stamey1989prostate} & classify & 97 & 9 & forest & 0.749 $F_1$ & {\bf 0.900 $F_1$} & 0.818 $F_1$ \\
prostate cancer & classify & 97 & 9 & neural net & 0.723 $F_1$ & {\bf 0.842 $F_1$} & 0.820 $F_1$ \\
auto mpg~\cite{quinlan1993combining} & regress & 398 & 8 & forest & 8.62 MSE & {\bf 2.29 MSE} & 2.51 MSE \\
auto mpg & regress & 398 & 8 & neural net & 13.77 MSE & {\bf 2.37 MSE} & 2.59 MSE \\
student grade~\cite{cortez2008using} & regress & 382 & 33 & forest & 4.47 MSE & {\bf 0.40 MSE} & 0.64 MSE \\
student grade & regress & 382 & 33 & neural net & 6.60 MSE & {\bf 4.27 MSE} & 5.10 MSE \\
mountain car~\cite{mountaincar_problem} & reinforce & 100 & 2 & control policy & $R=-$140.0 & {\bf 81.3\%} & 78.6\% \\
pendulum~\cite{openai_pendulum} & reinforce & 100 & 3 & control policy & $R=-$638.2 & {\bf 0.56 MSE} & 1.66 MSE \\
\bottomrule
\end{tabular}
\caption{Comparison of our algorithm to CART. For (binary) classification, fidelity is $F_1$ score on the test set, and for regression, it is MSE. For reinforcement learning, fidelity is accuracy (discrete actions) or MSE (continuous actions) on the test set. On every problem instance, our algorithm outperforms CART in terms of fidelity.}
\label{tab:results}
\end{center}
\end{scriptsize}
\end{table*}

\subsection{Fidelity}

High fidelity ensures that the extracted decision tree reflects the blackbox model. We measure fidelity using $F_1$ score on the held-out test set $\tilde{X}_{\text{test}}=\{(x,f(x))\mid x\in X_{\text{test}}\}$, where $X_{\text{test}}$ is the original test set. All results are medians over 20 random train/test splits. We compare to CART trees~\cite{CART} and born-again trees~\cite{breiman1996born}, rule lists~\cite{letham2015interpretable,yang2017scalable}, and decision sets~\cite{lakkaraju2016interpretable,lakkaraju2017interpretable}. The born-again algorithm requires an input distribution; we use our estimated Gaussian mixture model $\mathcal{P}$.

\paragraph{Comparison to other decision trees.}

In Figure~\ref{fig:fidelity} (a) and (b), we compare the fidelity of our decision trees to that of CART trees and born-again trees for varying sizes (i.e., total number of nodes $k$); we outperform both baselines in every case. For larger decision trees, our active sampling strategy greatly reduces overfitting. In contrast, the born-again algorithm is unable to generate a substantial number of new training points at deeper levels of the tree since it uses rejection sampling.

\paragraph{Comparison to other model families.}

In Figure~\ref{fig:fidelity} (c), we compare the fidelity of our decision trees (size 31) to rule lists (trained using~\cite{yang2017scalable}) and decision sets (trained using~\cite{lakkaraju2016interpretable}). We use implementations obtained from the authors. Both require us to bin continuous features; for diabetes risk, we bin age into 7 bins, and for cart-pole, we use the discretization in our MDP. We find that~\cite{lakkaraju2016interpretable} does not scale to the diabetes risk benchmark, likely because it has hundreds of features. Our decision tree substantially outperforms the baselines. The difference on cart-pole is especially large; we believe the difference arises since features are binned---thus, the decision tree inputs are 4 dimensional, whereas the rule list and decision set inputs are 28 dimensional, making them more likely to overfit.

\paragraph{Stability of decision trees.}

Theorem~\ref{thm:main} suggests that our decision trees should become more and more stable as the number of samples $n\to\infty$. We examine whether this behavior holds empirically for decision trees extracted for the diabetes risk benchmark. In particular, we used both our algorithm and the born-again algorithm to extract 10 random decision trees $T_1,...,T_{10}$. For each pair $T_k$ and $T_h$ (where $k\neq h$), we count the fraction of corresponding nodes in $T_k$ and $T_h$ that are equal (i.e., the branches at those nodes have equal values of $i$ and $t$). Then, we computed the average over possible pairs. Intuitively, this metric captures the similarity between two random extracted decision trees. This metric is 0.52 (for our algorithm) vs. 0.22 (for the born-again algorithm) when $n=2000$, and 0.67 (for our algorithm) vs. 0.25 (for the born-again algorithm) for $n=20000$. Thus, our algorithm produces substantially more stable trees than the born-again algorithm (since it is able to get more samples to estimate each branch).

\paragraph{Additional comparisons to CART.}

We compare the fidelity of our algorithm to CART on a range of datasets; see Figure~\ref{tab:results} for results. We outperform CART in every problem instance.

\subsection{Interpretability for Diabetes Risk}

\begin{figure}
\centering
\begin{tabular}{c}
\includegraphics[width=0.45\textwidth]{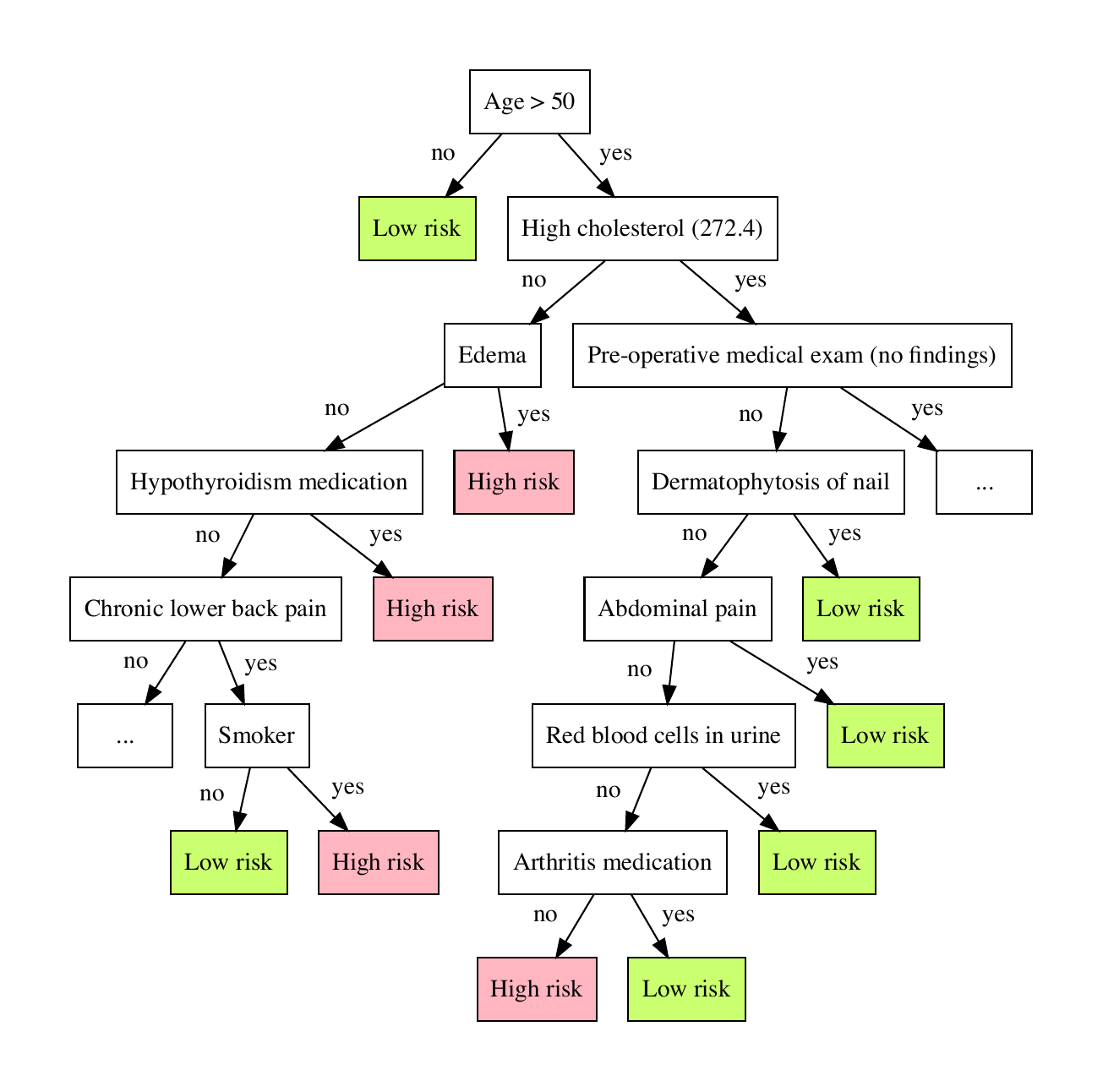} \\
(a) \\\\
\includegraphics[width=0.45\textwidth]{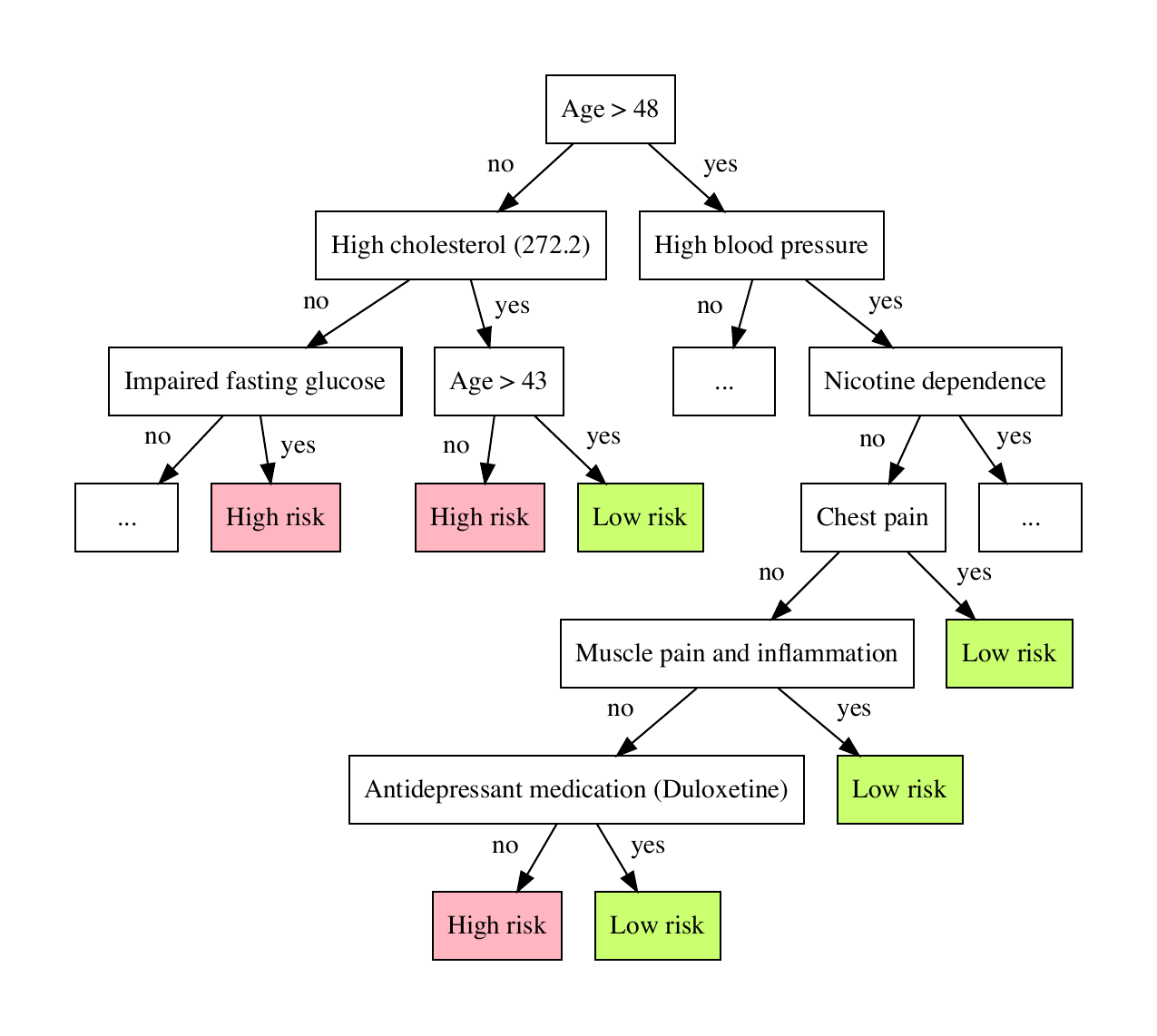} \\
(b) \\\\
\includegraphics[width=0.45\textwidth]{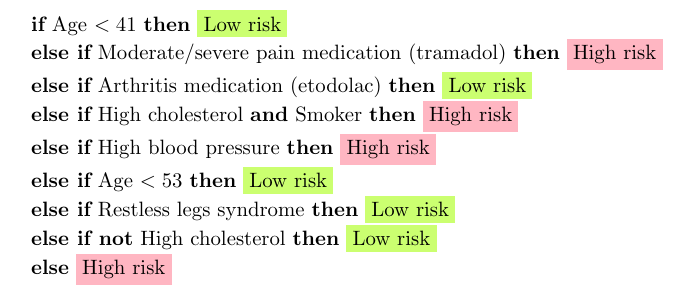} \\
(c)
\end{tabular}
\caption{Global explanations of a risk random forest for predicting diabetes risk: (a) our decision tree, (b) our decision tree for a random forest trained on data from an alternate provider, and (c) a rule list.}
\label{fig:diabetesmodels}
\end{figure}

We performed a user study to evaluate the interpretability of our decision trees. Since our goal is to enable data scientists familiar with machine learning to understand and validate the complex model, we recruited 46 graduate students with a background in machine learning for our study. Each participant answered questions intended to test their understanding of various interpretations; we asked them to skip a question if they were unable to determine the answer in 1-2 minutes. We show images of our user study interface in Appendix~\ref{sec:userstudyappendix}. We randomized the order of the models and corresponding questions. First, we describe the results for the diabetes risk benchmark.

\paragraph{Interpretations.}

We compare the two interpretations of the random forest: (i) a (randomly chosen) decision tree with 31 nodes extracted by our algorithm; a simplified version is shown in Figure~\ref{fig:diabetesmodels} (a), and (ii) a rule list extracted using~\cite{yang2017scalable}, shown in Figure~\ref{fig:diabetesmodels} (c). We visualize decision trees using Graphviz, and rule lists as if-then-else programs, which is the formatting used in prior work~\cite{yang2017scalable} (all our users were familiar with programming). Otherwise, we tried to keep the visualizations consistent.

\newcolumntype{P}{ >{\arraybackslash} p{0.45\textwidth} }
\newcolumntype{M}{ >{\centering\arraybackslash} m{0.45\textwidth} }

\begin{figure*}
\centering
\begin{tiny}
\begin{tabular}{PP}
Consider patients over 50 years old who are otherwise healthy and are not taking any medications. According to the decision tree, are these patients at a high risk for diabetes?
\begin{itemize}
\item Yes 
\item No
\end{itemize}
&
Consider patients over 53 years old who are otherwise healthy and are not taking any medications. According to the rule list, are these patients at a high risk for diabetes?
\begin{itemize}
\item Yes 
\item No 
\end{itemize}
\\
Smoking is known to increase risk of diabetes, so the local hospital has started a program to help smokers quit smoking. According to the decision tree, which patient subpopulation should we target in this program if we want to reduce diabetes risk?
\begin{itemize}
\item Patients over 50 years old who have high cholesterol
\item Patients over 50 years old who have chronic lower back pain
\item Patients over 50 years old who have high cholesterol, edema, chronic lower back pain, and who take medication for hypothyroidism
\end{itemize}
&
Smoking is known to increase risk of diabetes, so the local hospital has started a program to help smokers quit smoking. According to the rule list, which patient subpopulation should we target in this program if we want to reduce diabetes risk?
\begin{itemize}
\item Patients over 41 years old 
\item Patients over 41 years old who have high cholesterol 
\item Patients over 41 years old who have high cholesterol, and take medication for arthritis
\end{itemize}
\end{tabular}
\end{tiny}
\caption{Examples of questions asked in our user study on the diabetes risk benchmark, for our decision tree (left) and for the rule list (right).}
\label{fig:diabetesquestions}
\end{figure*}

\paragraph{Questions.}

We designed five questions to test whether the user could understand an interpretable model. To ensure fairness, these questions were designed independently of the interpretations. However, we needed to adapt the questions to each of the two interpretations---in particular, we modified the possible answers to fit the structure of the interpretation so there was a single correct answer. Two examples of questions are shown in Figure~\ref{fig:diabetesquestions}; the variants on the left are for the decision tree, and those on the right are for the rule list. The first question tests whether the user can determine how the interpretable model classifies a given patient. The second question tests whether the user is able to identify the subpopulation for which ``Smoker'' is a relevant feature; enabling users to understand these subpopulation-level effects is a useful benefit of global explanations. 

\paragraph{Results.}

We show user accuracies for each interpretation in Figure~\ref{fig:fidelity} (d) (averaged across users and questions). Users responded equally or more accurately for the decision tree, even though it is much larger than the rule list; this effect is significant ($p=0.02$ using a paired $t$-test with 46 samples). For every question, a majority of users answered correctly, so we believe our questions were fair.

\paragraph{Difficulty with conditional structure.}

We found that users had difficulty understanding the conditional structure of the rule list---one of our questions required users to determine that only the first three lines of the model were relevant for patients taking arthritis medication, but only 65\% of users answered correctly. For our decision tree, 91\% of users correctly answered the corresponding question. This effect has also been identified in previous work~\cite{lakkaraju2016interpretable}.

\paragraph{Interpretation vs. blackbox model.}

The goal of our user study is to measure how well users can understand the interpretation. Intuitively, having high fidelity should ensure that the answers to questions according to the interpretation accurately reflect the true answers according to the blackbox model. To be sure, we evaluated how often these two answers are equal. In particular, each question in our evaluation (except question 3 on the diabetes dataset) asks for either a prediction (e.g., ``Are patients age $\ge41$ classified as high risk?'') or a counterfactual (e.g., ``For a smoker age $\ge41$ who is a smoker, does an intervention that gets them to quit smoking reduce diabetes risk?''). For a prediction, we compute the true answer according to the blackbox model using the formula
\begin{align*}
y=\arg\max_{y\in\Y}\text{Pr}_{x\sim\mathcal{P}}[f(x)=y\mid C],
\end{align*}
where $f$ is the blackbox model and $C$ is the condition on $x$ (e.g., $\text{age}\ge41$). Similarly, for a counterfactual, we use the formula
\begin{align*}
\mathbb{E}_{x\sim\mathcal{P}}[f(x)\mid C,\text{do}(C')]-\mathbb{E}_{x\sim\mathcal{P}}[f(x)\mid C]\stackrel{?}{\le}0,
\end{align*}
where $C$ is the condition (e.g., $\text{age}\ge41\wedge\text{smoker}=\text{true}$) and $C'$ is the counterfactual (e.g., $\text{do}(\text{smoker}=\text{false})$). Of the 4 questions regarding the diabetes dataset (since we omit question 3), all 4 of the answers according to the decision tree matched the true answer, whereas only 3 of the answers according to the rule list matched.

\subsection{Discussion of the Diabetes Risk Classifier}

\paragraph{Variations across providers.}

We can use interpretations to understand differences in random forests trained on patients from different providers. In Figure~\ref{fig:diabetesmodels} (b), we show a decision tree trained on data from an alternate provider, which contained EMRs for 402 patients. We can immediately identify differences in how diagnoses were reported. For example, there are several ICD-9 codes corresponding to high cholesterol; for the original provider, 30\% of patients were diagnosed with 272.4 (``unspecified hyperlipidemia''), whereas only 2\% of patients were diagnosed with 272.2 (``mixed hyperlipidemia''). In contrast, for the alternate provider, 19\% of patients were diagnosed with 272.2, and 18\% of patients were diagnosed with 272.4. As another example, for the alternate provider, 10\% of patients were diagnosed with ``Impaired fasting glucose'', which appears in Figure~\ref{fig:diabetesmodels} (c). In contrast, for the original provider, only 1\% of patients have this diagnosis; indeed, this feature never shows up in interpretations for random forests trained on patients from the original provider. Understanding such covariate shifts among the patient populations for different providers can help data scientists adapt existing models to new providers.

\paragraph{Dependence on previous doctor visits.}

A notable feature of the decision tree in Figure~\ref{fig:diabetesmodels} (a) is the subtree rooted at the node labeled ``Dermatophytosis of nail''. In this subtree, if the patient has any of the diagnoses listed, then the decision tree classifies the patient as low risk; otherwise, they are classified as high risk. This effect occurs across providers, e.g., in subtree rooted at ``Chest pain'' in Figure~\ref{fig:diabetesmodels} (b). This effect is present (and statistically significant) in the data and in the random forest.

This effect is likely non-causal---in an interview with a physician, we learned that these diagnoses have no known relationship with diabetes risk. After examining the decision tree, the physician suggested a plausible explanation---patients who have these diagnoses are more likely to have visited a doctor at least once in the past year prior to their diabetes diagnosis, upon which the doctor may have recommended interventions designed to reduce the patient's diabetes risk. In contrast, patients who have not visited a doctor in the past year may not have realized they were at high risk for diabetes, especially since this subtree is conditioned on patients who are over 50 years old and have high cholesterol, which are both known risk factors for diabetes. Indeed, we found that among patients over 50 years old with high cholesterol, diabetes risk was actually (statistically significantly) higher if the patient had zero previous doctor visits than if the patient had a single previous visit (despite the fact that we would expect patients with one previous visit to be sicker).

Understanding such non-causal effects is important~\cite{caruana2015intelligible}---many patients in the subpopulation defined by this subtree are in particular need of preventative interventions, yet the classifier is proposing to discontinue interventions precisely for these patients---and our interpretations provide a promising way to do so. We note that relative influence scores cannot identify this effect, since they do not examine patient subpopulations, and the effect only applies to the subpopulation of patients that are at least 50 years old and have high cholesterol. For example, the correlation of ``Abdominal pain'' with high risk is $8.1\times10^{-3}$; however, conditioned on age greater than 50 and having high cholesterol, the correlation is $-9.8\times10^{-2}$. Indeed, none of the features in this subtree appear in the top 40 relative influence scores of the random forest.

\paragraph{Non-monotone dependence on age.}

Note that age appears twice in the decision tree in Figure~\ref{fig:diabetesmodels} (c). Typically, younger patients are at lower risk for diabetes; however, conditioned on being less than 48 years old and having high cholesterol, the classifier predicts higher risk for younger patients. While we cannot be certain of the cause, there are a number of possible explanations. For example, it may be the case that a diagnosis of high cholesterol in younger patients is abnormal and therefore much more indicative of high diabetes risk. Alternatively, doctors may be more likely to urge older patients with high cholesterol to take preventative measures to reduce diabetes risk.

This structure demonstrates how the decision tree can capture non-monotone dependencies on continuous features such as age. In contrast, non-monotone dependencies cannot be captured by relative influence scores. Rule lists can capture such dependencies, but their structure makes it more difficult to understand the effect---for example, to reason about the relationship between the first and sixth rules of the rule list in Figure~\ref{fig:diabetesmodels} (c), we also have to reason about the four intermediate rules.

\subsection{Interpretability for Cart-pole}

\begin{figure}
\centering
\begin{tabular}{M}
\includegraphics[width=0.35\textwidth]{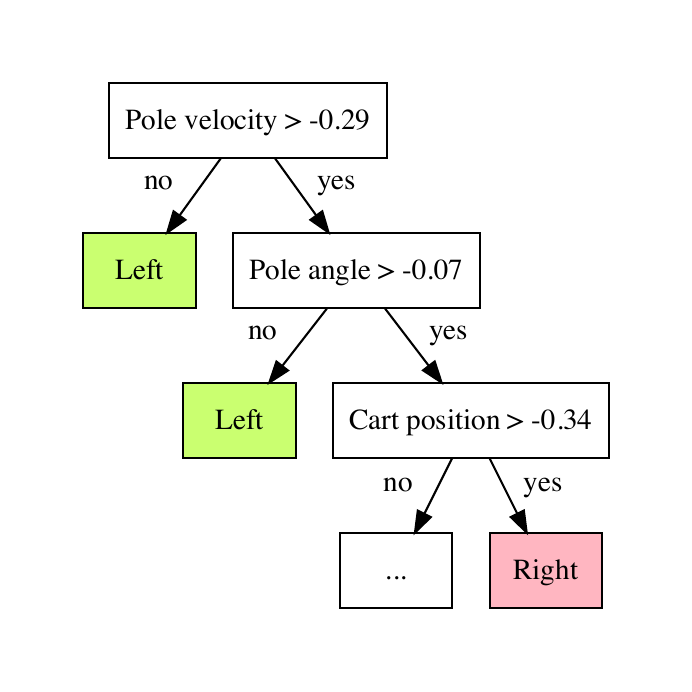} \\
(a) \\
\includegraphics[width=0.35\textwidth]{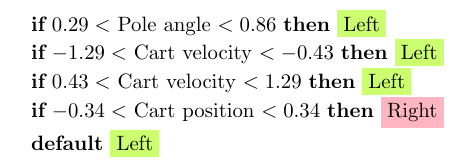} \\
(b)
\end{tabular}
\caption{Global explanations of the cart-pole policy: (a) our decision tree, and (b) a decision set.}
\label{fig:cartpolemodels}
\end{figure}

Next, we describe the part of our user study focused on the cart-pole control policy.

\paragraph{Interpretations.}

We compare the two interpretations of the control policy: (i) a decision tree with 15 nodes extracted using our algorithm; a simplified version is shown in Figure~\ref{fig:cartpolemodels} (a), and (ii) a decision set trained using~\cite{lakkaraju2016interpretable}, shown in Figure~\ref{fig:cartpolemodels} (b).

\paragraph{Questions.}

We designed three questions to check whether user could understand the interpretations. Two questions test whether the user can reason about desirable symmetries of the policy, e.g.:
\begin{displayquote}
In theory, the action taken should not depend on the position of the cart. Does the decision tree satisfy this property?
\end{displayquote}
The third question tests whether the user can compute how the interpretation acts in given state. As for the diabetes risk benchmark, we adapt each question to each of the two interpretations.

\paragraph{Results.}

We show user accuracies for each interpretation in Figure~\ref{fig:fidelity} (d) (averaged across users and questions). Users responded equally or more accurately for our decision tree, even though it is larger (the difference is not statistically significant). For every question, a majority of users answered correctly, so we believe our questions were fair.

\paragraph{Interpretation vs. blackbox model.}

As with the questions about the diabetes dataset, we checked how often the answers according to the interpretation matched the true answers according to the blackbox model. In this case, for both the decision tree and the decision set, all of the answers matched.

\subsection{Discussion of the Cart-Pole Controller}

\paragraph{Translation invariance.}

We expect that the motion of the cart-pole should be invariant to translating the cart position. However, it is easy to see from both the decision tree and the decision set in Figure~\ref{fig:cartpolemodels} that the learned policy does not exhibit this symmetry. This asymmetry likely arises because the MDP simulation always starts from a similar initial position. Thus, the cart position is highly correlated with its velocity, so the control policy can use the two interchangeably to predict what action to take. Understanding this bias in the control policy is important because it indicates that the control policy may not generalize well if the initial position changes substantially.

\paragraph{Reflection invariance.}

We also expect the motion of the cart-pole to be invariant to reflection across the $y$-axis (i.e., flip left and right). However, the models in Figure~\ref{fig:cartpolemodels} do not exhibit this symmetry. This asymmetry likely arises because in the MDP simulation, the pole typically initially falls toward the left. Thus, to maximize performance, the control policy focuses on stopping the pole from falling toward the left, which requires moving the cart toward the left. As before, the control policy may not generalize well if we change the initial direction in which the pole is falling.

\section{Conclusion}

We have proposed an approach for interpreting blackbox models based on decision tree extraction, and shown how it can be used to interpret blackbox models. Important directions for future work include devising algorithms for model extraction using more expressive input distributions, and developing new ways to gain insight from the extracted decision trees.


\bibliographystyle{icml2018}
\bibliography{paper}

\clearpage

\onecolumn

\appendix
\section{Proofs of Main Results}
\label{sec:mainresultsappendix}

In this section, we give a proof of Theorem~\ref{thm:main}.

\subsection{Proof Overview}

At a high level, the idea behind our proof of Theorem~\ref{thm:main} is to show that the internal structure of $T$ converges to that of $T^*$. Intuitively, this result holds because as we estimate $T$ a larger and larger number of samples, the parameters $(i,t)$ of each internal node of $T$ and the parameters $(y)$ of each leaf node of $T$ should converge to the parameters of $T^*$. As long as the internal node parameters converge, then an input $x\in\mathcal{X}$ should be routed to leaf nodes in $T$ and $T^*$ at the same position. Then, as long as the internal node parameters converge, then $x$ should furthermore be assigned the same label by $T$ and $T^*$.

The key challenge is that the internal node parameter $t$ is continuous, so $T$ always has some error compared to $T^*$. Thus, to prove Theorem~\ref{thm:main}, we have to quantify this error and show that it goes to zero as $n$ goes to infinity. Intuitively, we quantify this error as the probability that an input is routed to the wrong leaf node in $T$.

Our main lemma formalizes this notion. We begin by establishing some notation. Consider a node $N^*$ in the exact greedy decision tree $T^*$. We define the function $\phi:T^*\to T$ to map $N^*$ to the node $N=\phi(N^*)$ at the corresponding position in the estimated greedy decision tree $T$ estimated using $n$ samples. Now, given an input $x\in\mathcal{X}$, we write $x\xrightarrow{T^*}N^*$ if $x$ is routed to node $N^*$ in $T^*$, and similarly $x\xrightarrow{T}N$ if $x$ is routed to node $N$ in $T$. Finally, we denote the leaves of $T^*$ and $T$ by $\textsf{leaves}(T^*)$ and $\textsf{leaves}(T)$, respectively.

Then, we have the following key result:
\begin{lemma}
\label{lem:mainbody}
Let $p(x)$ be the probability density function for the distribution $\mathcal{P}$, let $N^*\in T^*$ and $N=\phi(N^*)$, and let
\begin{align*}
p_{N^*}(x)&=p(x)\cdot\mathbb{I}[x\xrightarrow{T^*}N^*] \\
p_N(x)&=p(x)\cdot\mathbb{I}[x\xrightarrow{T}N].
\end{align*}
Then, $\|p_N-p_{N^*}\|_1$ converges in probability to $0$ (where the randomness is taken over the $n$ samples used to extract $T$), i.e., for any $\epsilon,\delta>0$, there exists $n>0$ such that
\begin{align*}
\|p_N-p_{N^*}\|_1\le\epsilon
\end{align*}
with probability at least $1-\delta$.
\end{lemma}
Intuitively, $p_{N^*}$ captures the distribution of points that are routed to $N^*$ in $T^*$, and $p_N$ captures the distribution of points that are routed to $N$ in $T$. Then, this lemma says that the distribution of points routed to $N^*$ and $N$ are similar. We prove this lemma in Section~\ref{sec:mainbodyproof}.

\subsection{Proof of Main Theorem}

We now use Lemma~\ref{lem:mainbody} to prove Theorem~\ref{thm:main}. In particular, we must show that the quantity $P=\text{Pr}_{x\sim\mathcal{P}}[T(x)\neq T^*(x)]$ is bounded by $\epsilon$ with probability at least $1-\delta$. Throughout the proof, we use Lemma~\ref{lem:mainbody} with parameters $\left(\frac{\epsilon}{K},\frac{\delta}{2K}\right)$, i.e., we have $\|p_N-p_{N^*}\|_1\le\frac{\epsilon}{K}$ with probability at least $1-\frac{\delta}{2K}$. By a union bound, this fact holds for every leaf node in $T^*$ with probability at least $1-\frac{\delta}{2}$.

Then, our proof proceeds in two steps:
\begin{enumerate}
\item We show that a leaf node $N\in T$ is correctly labeled as long as $\epsilon$ is sufficiently small. More precisely, let $N^*\in\textsf{leaves}(T^*)$ such that $N^*=(y^*)$, and let $N=\phi(N^*)\in\textsf{leaves}(T)$ such that $N=(y)$; then, we show that for any $\delta'>0$, there exists $n\in\mathbb{N}$ such that $y=y^*$ with probability at least $1-\delta'$ (where the randomness is taken over the $n$ samples used to extract $T$).
\item Using the Lemma~\ref{lem:mainbody} together with the first step, we show that $P\le\epsilon$ with probability at least $1-\delta$.
\end{enumerate}

\paragraph{Proving $y=y^*$.}

Let $p(x)$ be the probability density function for the distribution $\mathcal{P}$, and let $N^*\in\textsf{leaves}(T^*)$ such that $N^*=(y^*)$ and $N=\phi(N^*)\in\textsf{leaves}(T)$ such that $N=(y)$. First, we rewrite the objective (\ref{eqn:exlabel}) in terms of $p_{N^*}$. In particular, for each $y'\in\mathcal{Y}$, let
\begin{align*}
p_{y'}^*&=\text{Pr}_{x\sim\mathcal{P}}[f(x)=y'\wedge(x\xrightarrow{T^*}N^*)] \\
&=\int\mathbb{I}[f(x)=y']\cdot\mathbb{I}[x\xrightarrow{T^*}N^*]\cdot p(x)dx.
\end{align*}
Then, we have $y^*=\argmax_{y'\in\mathcal{Y}}p_{y'}^*$, since the denominator $\text{Pr}_{x\sim\mathcal{P}}[x\xrightarrow{T^*}N^*]$ in (\ref{eqn:exlabel}) is constant with respect to $y'$. Similarly, we rewrite the objective (\ref{eqn:exlabel}) in terms of $p_{N^*}$, letting
\begin{align*}
p_{y'}=\frac{1}{n}\sum_{j=1}^n\mathbb{I}[f(x^{(j)})=y']\cdot\mathbb{I}[x^{(j)}\xrightarrow{T}N]
\end{align*}
for each $y'\in\mathcal{Y}$, in which case we have $y=\argmax_{y'\in\mathcal{Y}}p_{y'}$.

By Assumption~\ref{assump:unique}, we know that $y^*$ is the unique maximizer of $p_y^*$, i.e.,
\begin{align*}
\Delta=p_{y^*}^*-\argmax_{y'\neq y^*}p_{y'}^*>0.
\end{align*}
Therefore, to show that $y=y^*$, it suffices to show that for each $y'\in\mathcal{Y}$, we have
\begin{align*}
|p_{y'}-p_{y'}^*|\le\frac{\Delta}{3},
\end{align*}
since then, for each $y'\in\mathcal{Y}$, we have
\begin{align*}
p_{y^*}-p_{y'}\ge\left(p_{y^*}^*-\frac{\Delta}{3}\right)-\left(p_{y'}^*+\frac{\Delta}{3}\right)\ge\frac{\Delta}{3}>0,
\end{align*}
which implies that $y=y^*$ since $y^*$ is the maximizer of $p_{y'}$.

To show that $|p_{y'}-p_{y'}^*|\le\Delta/3$, we first define
\begin{align*}
\tilde{p}_{y'}=\int\mathbb{I}[f(x)=y']\cdot\mathbb{I}[x\xrightarrow{T}N]\cdot p(x)dx.
\end{align*}
Then, we have
\begin{align*}
|p_{y'}-p_{y'}^*|\le|p_{y'}-\tilde{p}_{y'}|+|\tilde{p}_{y'}-p_{y'}^*|.
\end{align*}
To bound the first term, let
\begin{align*}
d_{y'}=\mathbb{I}[f(x)=y']\cdot\mathbb{I}[x\xrightarrow{T}N]
\end{align*}
be a Bernoulli random variable, so
\begin{align*}
d_{y'}^{(j)}=\mathbb{I}[f(x^{(j)})=y']\cdot\mathbb{I}[x^{(j)}\xrightarrow{T}N]
\end{align*}
are samples of $d_{y'}$ for $j\in[n]$. Then, we have $\tilde{p}_{y'}=\mathbb{E}[d_{y'}]$ and $p_{y'}=n^{-1}\sum_{j=1}^nd_{y'}^{(j)}$, so we can apply Hoeffding's inequality to get
\begin{align*}
\text{Pr}\left[|p_{y'}-\tilde{p}_{y'}|>\frac{\Delta}{6}\right]\le2\exp\left(-\frac{n\Delta^2}{18}\right).
\end{align*}
To bound the second term, note that
\begin{align*}
|\tilde{p}_{y'}-p_{y'}^*|&=\left|\int\mathbb{I}[f(x)=y']\cdot(\mathbb{I}[x\xrightarrow{T}N]-\mathbb{I}[x\xrightarrow{T^*}N^*])\cdot p(x)dx\right| \\
&\le\int|\mathbb{I}[x\xrightarrow{T}N]-\mathbb{I}[x\xrightarrow{T^*}N^*]|\cdot p(x)dx \\
&=\|p_N-p_{N^*}\|_1 \\
&\le\epsilon.
\end{align*}

Finally, assume that $\epsilon<\Delta/6$; then, taking a union bound over $y'\in\mathcal{Y}$, we have that
\begin{align*}
|p_{y'}-p_{y'}^*|\le\frac{\Delta}{3}
\end{align*}
for all $y'\in\mathcal{Y}$ with probability at least $1-\delta'$, where
\begin{align*}
\delta'=2\cdot|\mathcal{Y}|\cdot\exp\left(-\frac{n\Delta^2}{18}\right).
\end{align*}
In particular, it follows that $y=y^*$ with probability at least $1-\delta'$.

\paragraph{Bounding $P$.}

First, we separate the contribution of each leaf node to $P$:
\begin{align*}
P
&=\text{Pr}_{x\sim\mathcal{P}}[T(x)\neq T^*(x)] \\
&=\sum_{N^*\in\textsf{leaves}(T^*)}\text{Pr}_{x\sim\mathcal{P}}[T(x)\neq T^*(x)\text{ and }x\xrightarrow{T^*}N^*].
\end{align*}

Next, we apply the result from the first step of this proof with parameter $\delta'=\frac{\delta}{2K}$ (where $K$ is the number of nodes in each $T^*$ and $T$); then, for any leaf node $N^*\in\textsf{leaves}(T^*)$, the label assigned to $N^*$ equals the label assigned to $N$ with probability at least $1-\frac{\delta}{2K}$. Taking a union bound over the leaf nodes, this fact holds true for all the leaf nodes with probability at least $1-\frac{\delta}{2}$. For the remainder of the proof, we assume that this fact holds.

Consider an input $x$ such that $x\xrightarrow{T^*}N^*$; as long as $N^*$ and $\phi(N^*)$ have the same label, and additionally $x\xrightarrow{T}\phi(N^*)$, then $T(x)=T^*(x)$. Thus, we have
\begin{align*}
\text{Pr}_{x\sim\mathcal{P}}[T(x)\neq T^*(x)\text{ and }x\xrightarrow{T^*}N^*]\le\text{Pr}_{x\sim\mathcal{P}}[\neg(x\xrightarrow{T}\phi(N^*))\text{ and }x\xrightarrow{T^*}N^*].
\end{align*}
As a consequence, we have
\begin{align*}
P&\le\sum_{N^*\in\textsf{leaves}(T^*)}\text{Pr}_{x\sim\mathcal{P}}[\neg(x\xrightarrow{T}\phi(N^*))\text{ and }x\xrightarrow{T^*}N^*] \\
&=\sum_{N^*\in\textsf{leaves}(T^*)}\int(1-\mathbb{I}[x\xrightarrow{T}\phi(N^*)])\cdot\mathbb{I}[x\xrightarrow{T^*}N^*]\cdot p(x)dx.
\end{align*}

Now, we claim that
\begin{align*}
(1-\mathbb{I}[x\xrightarrow{T}\phi(N^*)])\cdot\mathbb{I}[x\xrightarrow{T^*}N^*]\le|\mathbb{I}[x\xrightarrow{T^*}N^*]-\mathbb{I}[x\xrightarrow{T}\phi(N^*)]|.
\end{align*}
To see this claim, note that both sides of inequality take values in $\{0,1\}$. Furthermore, the right-hand side equals $0$ only if the two indicators are equal. In this case, the left-hand side also equals $0$, so the claim follows. Thus, we have
\begin{align*}
P&\le\sum_{N^*\in\textsf{leaves}(T^*)}\int|\mathbb{I}[x\xrightarrow{T^*}N^*]-\mathbb{I}[x\xrightarrow{T}\phi(N^*)]|\cdot p(x)dx \\
&=\sum_{N^*\in\textsf{leaves}(T^*)}\int|\mathbb{I}[x\xrightarrow{T^*}N^*]\cdot p(x)-\mathbb{I}[x\xrightarrow{T}\phi(N^*)]\cdot p(x)|dx \\
&=\sum_{N^*\in\textsf{leaves}(T^*)}\|p_{\phi(N^*)}-p_{N^*}\|_1.
\end{align*}

By Lemma~\ref{lem:mainbody}, we have
\begin{align*}
P&\le\sum_{N^*\in\textsf{leaves}(T^*)}\frac{\epsilon}{K}\le\epsilon.
\end{align*}

Since Lemma~\ref{lem:mainbody} holds with probability at least $1-\frac{\delta}{2}$, and the first part of this proof holds with probability at least $1-\frac{\delta}{2}$, by a union bound, we have $P\le\epsilon$ with probability at least $1-\delta$, which completes the proof.

\subsection{Proof of Main Lemma}
\label{sec:mainbodyproof}

The key idea behind proving Lemma~\ref{lem:mainbody} is to use induction on the structure of the tree. More precisely, it is clear that Lemma~\ref{lem:mainbody} holds for the root node $N_{T^*}$ of $T^*$, since every input is routed to the root, i.e.,
\begin{align*}
\mathbb{I}[x\xrightarrow{T^*}N_{T^*}]=\mathbb{I}[x\xrightarrow{T}\phi(N_{T^*})]=1
\end{align*}
for all $x\in\mathcal{X}$. Then, it suffices to show that if Lemma~\ref{lem:mainbody} holds for the parent of a node $N^*\in T^*$, then it holds for $N^*$ as well.

More precisely, let $M^*$ be the parent of $N^*$, and let $M=\phi(M^*)$ be the parent of $N=\phi(N^*)$. Our goal is to prove that, assuming
\begin{align*}
  \|p_M-p_{M^*}\|_1\xrightarrow{p}0,
\end{align*}
then
\begin{align*}
\|p_N-p_{N^*}\|_1\xrightarrow{p}0
\end{align*}
as well (note that we use $\xrightarrow{p}$ to denote convergence in probability).

For simplicity, we prove the one-dimensional case, i.e., $\mathcal{X}=\mathbb{R}$. Proving the general case is a straightforward extension of our proof, but requires extra bookkeeping that obscures the key ideas. In particular, let $N^*\in T^*$ have form $N^*=(i^*,t^*)$, and let $N=\phi(N^*)\in T$ have form $N=(i,t)$. When $d=1$, we know that $i=i^*=1$, so we only have to prove that $t$ converges to $t^*$. Proving that $i$ converges to $i^*$ is straightforward since there are only finitely many choices for $i$. With this restriction, we can assume that internal nodes have only a single parameter, i.e., $N^*=(t^*)$ where $t^*\in\mathbb{R}$, and $N=\phi(N^*)=(t)$ where $t\in\mathbb{R}$.

We begin our proof by expressing $p_N$ in terms of $p_M$. We assume without loss of generality that $N$ is the left child of $M$. Then, note that
\begin{align*}
\mathbb{I}[x\xrightarrow{T}N]&=\mathbb{I}[x\xrightarrow{T}M]\cdot\mathbb{I}[x\le t],
\end{align*}
where $M=(t)$, so we have
\begin{align*}
p_N(x)=p_M(x)\cdot\mathbb{I}[x\le t].
\end{align*}
Now, our proof proceeds in two steps:
\begin{enumerate}
\item First, we show assuming $\|p_M-p_{M^*}\|_1\xrightarrow{p}0$, then $t\xrightarrow{p}t^*$.
\item Second, we show that assuming $t\xrightarrow{p}t^*$, then $\|p_N-p_{N^*}\|_1\xrightarrow{p}0$.
\end{enumerate}

\paragraph{Step 1: Proving $t\xrightarrow{p}t^*$.}

First, we show that $\|p_M-p_{M^*}\|_1\xrightarrow{p}0$ implies
\begin{align*}
\|G-G^*\|_{\infty}\xrightarrow{p}0,
\end{align*}
where
\begin{align*}
G^*(s)&=G(i,s;\mathcal{P}\mid C_{M^*}) \\
G(s)&=G(i,s;\mathcal{P}_M)
\end{align*}
are the gain functions for $T^*$ and $T$, respectively, where $G(i,s;\mathcal{Q})$ is defined in (\ref{eqn:gain}); as noted above, we have assumed $i=1$ is a constant to simplify our exposition. Proving this step depends on the gain function being used to train the decision tree; we show that it holds for the gain function based on the Gini impurity in Lemma~\ref{lem:gini}.

Next, we show that as long as $\|G-G^*\|_{\infty}$ is sufficiently small, then the difference between their corresponding maximizers
\begin{align*}
t^*&=\argmax_sG^*(s) \\
t&=\argmax_sG(s)
\end{align*}
is small as well, i.e., $t\xrightarrow{p}t^*$.

By Assumption~\ref{assump:unique}, we can prove the existence of a \emph{gap}, which intuitively is an interval around $t^*$ outside of which the $G^*(s)$ is ``sufficiently smaller'' than $G^*(t^*)$. More precisely:
\begin{definition}
\label{def:gap}
We say that a function $g:\mathbb{R}\to\mathbb{R}$ is \emph{$(\epsilon,\delta)$-gapped} if it has a unique maximizer $s^*=\argmax_{s\in\mathbb{R}}g(s)$, and for every $s\in\mathbb{R}$ such that $|s-s^*|>\epsilon$, we have $g(s^*)>g(s)+\delta$.
\end{definition}

We show that as long as $G^*$ is continuous and has bounded support, then for any $\epsilon'>0$, there exists $\delta'>0$ such that $G^*$ is $(\epsilon',\delta')$-gapped; in Lemma~\ref{lem:ginigap}, we show that the gain function $G^*$ based on the Gini impurity satisfies these technical assumptions.  Then, let $s_{\text{max}}$ be a bound on the support of $G^*$, i.e., $G^*(s)=0$ if $|s|>s_{\text{max}}$. Let $\epsilon'>0$ be arbitrary, and let
\begin{align*}
A_{\epsilon'}=\{s\in\mathbb{R}\mid|s|\le s_{\text{max}}\text{ and }|s-s^*|\ge\epsilon'\}.
\end{align*}
Note that $A_{\epsilon'}$ is a compact set, so $G^*$ achieves its maximum on $A_{\epsilon'}$, i.e.,
\begin{align*}
t_{\epsilon'}^*=\argmax_{s\in A_{\epsilon'}}G^*(s).
\end{align*}
Then, $G^*$ is $(\epsilon',\delta')$-gapped, where
\begin{align*}
\delta'=\frac{G^*(t^*)-G^*(t_{\epsilon'}^*)}{2}>0.
\end{align*}
Note that we divide by $2$ since the inequality in Definition~\ref{def:gap} is strict.

Now, we show that having a gap implies $t\xrightarrow{p}t^*$. In particular, suppose that $\|G^*-G\|_{\infty}\le\frac{\delta'}{2}$. Then, we have
\begin{align*}
G^*(t^*)-G^*(t)&\le\left(G(t^*)+\frac{\delta'}{2}\right)-\left(G(t)-\frac{\delta'}{2}\right) \\
&\le G(t^*)-G(t)+\delta' \\
&\le\delta',
\end{align*}
where the last step follows since $t$ is the maximizer of $G$. In particular, we have shown that $|G^*(t^*)-G^*(t)|\le\delta'$, so since $G^*$ is $(\epsilon',\delta')$-gapped, it follows that $|t-t^*|\le\epsilon'$. Since $\|G^*-G\|_{\infty}\xrightarrow{p}0$, it follows that $t\xrightarrow{p}t^*$.

\paragraph{Step 2: Proving $\|p_N-p_{N^*}\|_1\xrightarrow{p}0$.}

Note that
\begin{align*}
\|p_N-p_{N^*}\|_1
&=\int|p_N(x)-p_{N^*}(x)|dx \\
&=\int|p_M(x)\cdot\mathbb{I}[x\le t]-p_{M^*}(x)\cdot\mathbb{I}[x\le t^*]|dx \\
&=\int|p_M(x)\cdot\mathbb{I}[x\le t]-(p_M(x)+p_{M^*}(x)-p_M(x))\cdot\mathbb{I}[x\le t^*]|dx \\
&\le\int p_M(x)\cdot|\mathbb{I}[x\le t]-\mathbb{I}[x\le t^*]|dx+\int|p_M(x)-p_{M^*}(x)|\cdot\mathbb{I}[x\le t^*]dx.
\end{align*}
Assume without loss of generality that $t\le t^*$. Then, for the first integral, note that the integrand equals $0$ for $x\not\in[t,t^*]$ and equals $1$ for $x\in[t,t^*]$. Thus,
\begin{align*}
\int p_M(x)\cdot|\mathbb{I}[x\le t]-\mathbb{I}[x\le t^*]|dx&=\int p_M(x)\cdot\mathbb{I}[t\le x\le t^*]dx \\
&=\int_t^{t^*}p_M(x)dx \\
&\le|t-t^*|\cdot p_{\text{max}},
\end{align*}
where the last step follows by Assumption~\ref{assump:p}, which says that $p(x)\le p_{\text{max}}$ for all $x\in\mathbb{R}$.

Next, for the second integral, note that
\begin{align*}
\int|p_M(x)-p_{M^*}(x)|\cdot\mathbb{I}[x\le t^*]dx&\le\|p_M-p_{M^*}\|_1.
\end{align*}
Together, we have
\begin{align*}
\|p_N-p_{N^*}\|_1\le\|p_M-p_{M^*}\|_1+|t-t^*|\cdot p_{\text{max}}.
\end{align*}
Since the left-hand side converges in probability to $0$, so does the right-hand side, as claimed.

\section{Proof of Technical Lemmas}

In this section, we prove the technical lemmas required for our proofs of Lemma~\ref{lem:mainbody} and Theorem~\ref{thm:main}.

\subsection{Proof of Convergence of the Gain Function}

In this section, we prove that the gain function $G$ converges uniformly to $G^*$ as $n\to\infty$. To simplify notation, we use slightly different notation for the Gini impurity $H$ compared to the definition in (\ref{eqn:gain}).

\begin{lemma}
\label{lem:gini}
Let
\begin{align*}
G^*(t)&=-H^*(f,C_{N^*}\wedge(x\le t))-H^*(f,C_{N^*}\wedge(x>t))+H^*(f,C_{N^*}) \\
H^*(f,C)&=\left(1-\sum_{y\in\mathcal{Y}}\left(\frac{\text{Pr}_{x\sim\mathcal{P}}[f(x)=y\wedge C]}{\text{Pr}_{x\sim\mathcal{P}}[C]}\right)^2\right)\cdot\text{Pr}_{x\sim\mathcal{P}}[C]
\end{align*}
be the gain function based on the Gini impurity for the exact greedy decision tree, and let
\begin{align*}
G(t)&=-H(f,C_N\wedge(x\le t))-H(f,C_N\wedge(x>t))+H(f,C_N) \\
H(f,C)&=\left(1-\sum_{y\in\mathcal{Y}}\left(\frac{\frac{1}{n}\sum_{j=1}^n\mathbb{I}[f(x^{(j)})=y\wedge x^{(j)}\in\mathcal{F}(C)]}{\frac{1}{n}\sum_{j=1}^n\mathbb{I}[x^{(j)}\in\mathcal{F}(C)]}\right)^2\right)\cdot\frac{1}{n}\sum_{j=1}^n\mathbb{I}[x^{(j)}\in\mathcal{F}(C)]
\end{align*}
be the corresponding gain function for the estimated greedy decision tree.

If $\|p_N-p_{N^*}\|_1\xrightarrow{p}0$, where
\begin{align*}
p_{N^*}(x)&=p(x)\cdot\mathbb{I}[x\xrightarrow{T^*}N^*] \\
p_N(x)&=p(x)\cdot\mathbb{I}[x\xrightarrow{T}N],
\end{align*}
then we have $\|G-G^*\|_{\infty}\xrightarrow{p}0$.
\end{lemma}
\begin{proof}
First, note that
\begin{align*}
\|G-G^*\|_{\infty}\le&\sup_{t\in\mathbb{R}}|H^*(f,C_{N^*}\wedge(x\le t))-H(f,C_N\wedge(x\le t))| \\
&+\sup_{t\in\mathbb{R}}|H^*(f,C_{N^*}\wedge(x>t))-H(f,C_N\wedge(x>t))| \\
&+\sup_{t\in\mathbb{R}}|H^*(f,C_{N^*})-H(f,C_N)|.
\end{align*}
We prove that the first term converges in probability to $0$ as $n\to\infty$; the remaining two terms can be bounded using the same argument. In particular, let
\begin{align*}
H^*(t)&=H^*(f,C_{N^*}\wedge(x\le t)) \\
H(t)&=H(f,C_N\wedge(x\le t)),
\end{align*}
so our goal is to show that $\|H-H^*\|_{\infty}\xrightarrow{p}0$. To simplify our expressions, define
\begin{align*}
g^*(t)&=\text{Pr}_{x\sim\mathcal{P}}[x\in\mathcal{F}(C_{N^*}\wedge(x\le t))] \\
h_y^*(t)&=\text{Pr}_{x\sim\mathcal{P}}[f(x)=y\wedge x\in\mathcal{F}(C_{N^*}\wedge(x\le t))] \\
g(t)&=\frac{1}{n}\sum_{j=1}^n\mathbb{I}[x^{(j)}\in\mathcal{F}(C_N\wedge(x\le t))] \\
h_y(t)&=\frac{1}{n}\sum_{j=1}^n\mathbb{I}[f(x^{(j)})=y\wedge x^{(j)}\in\mathcal{F}(C_N\wedge(x^{(j)}\le t))],
\end{align*}
A useful fact is that
\[0\le h_y^*(t)\le g^*(t)\le1\]
\[0\le h_y(t)\le g(t)\le 1\]
for all $t\in\mathbb{R}$ and all $y\in\mathcal{Y}$ (but assuming the random samples $x^{(j)}$ are fixed). Now, we have
\begin{align*}
H^*(t)&=\left(1-\sum_{y\in\mathcal{Y}}\left(\frac{h_y^*(t)}{g^*(t)}\right)^2\right)\cdot g^*(t) \\
&=g^*(t)-\sum_{y\in\mathcal{Y}}\frac{h_y^*(t)^2}{g^*(t)},
\end{align*}
and similarly
\begin{align*}
H(t)=g(t)-\sum_{y\in\mathcal{Y}}\frac{h_y(t)^2}{g(t)}.
\end{align*}
Then, we have
\begin{align*}
\|H-H^*\|_{\infty}\le\sup_{t\in\mathbb{R}}|g(t)-g^*(t)|+\sum_{y\in\mathcal{Y}}\sup_{t\in\mathbb{R}}\left|\frac{h_y^*(t)^2}{g^*(t)}-\frac{h_y(t)^2}{g(t)}\right|.
\end{align*}
We show that for a fixed $y\in\mathcal{Y}$, we have
\begin{align}
\label{eqn:single}
\sup_{t\in\mathbb{R}}\left|\frac{h_y^*(t)^2}{g^*(t)}-\frac{h_y(t)^2}{g(t)}\right|\xrightarrow{p}0.
\end{align}
Bounding the first term of $\|H-H^*\|_{\infty}$ follows similarly; together, these limits imply that $\|H-H^*\|_{\infty}\xrightarrow{p}0$ as well. We break the remainder of the proof into two steps:
\begin{enumerate}
\item First, we prove that $\|g-g^*\|_{\infty}\xrightarrow{p}0$ and $\|h_y-h_y^*\|_{\infty}\xrightarrow{p}0$.
\item Second, we use the first part to show that (\ref{eqn:single}) holds.
\end{enumerate}

\paragraph{Step 1.}

We prove that $\|h_y-h_y^*\|_{\infty}\xrightarrow{p}\to0$; the claim $\|g-g^*\|_{\infty}\xrightarrow{p}0$ follows similarly. First, note that
\begin{align*}
h_y^*(t)&=\int\mathbb{I}[f(x)=y]\cdot\mathbb{I}[x\xrightarrow{N^*}T^*]\cdot\mathbb{I}[x\le t]\cdot p(x)dx \\
&=\int\mathbb{I}[f(x)=y]\cdot\mathbb{I}[x\le t]\cdot p_{N^*}(x)dx,
\end{align*}
and define
\begin{align*}
\tilde{h}_y(t)=\int\mathbb{I}[f(x)=y]\cdot\mathbb{I}[x\le t]\cdot p_N(x)dx.
\end{align*}
Then, note that
\begin{align*}
\|h_y-h_y^*\|_{\infty}\le\|h_y-\tilde{h}_y\|_{\infty}+\|\tilde{h}_y-h_y^*\|_{\infty}.
\end{align*}
Bounding the first term, which represents the estimation error, is somewhat involved, so we relegate the proof to another lemma. In particular, taking $g=h_y$ and $g^*=\tilde{h}_y$ in Lemma~\ref{lem:estimation}, it follows that $\|h_y-\tilde{h}_y\|_{\infty}\xrightarrow{p}0$.

To bound the second term, note that
\begin{align*}
\|\tilde{h}_y-h_y^*\|_{\infty}&=\sup_{t\in\mathbb{R}}\left|\int\mathbb{I}[f(x)=y]\cdot\mathbb{I}[x\le t]\cdot(p_N(x)-p_{N^*}(x))dx\right| \\
&\le\sup_{t\in\mathbb{R}}\int|p_N(x)-p_{N^*}(x)|dx \\
&=\|p_N-p_{N^*}\|_1 \\
&\xrightarrow{p}0,
\end{align*}
where the last step follows by our assumption.

\paragraph{Step 2.}

Let $\epsilon,\delta>0$ be arbitrary. We need to show that
\begin{align*}
\left|\frac{h_y^*(t)^2}{g^*(t)}-\frac{h_y(t)^2}{g(t)}\right|\le\epsilon
\end{align*}
for every $t\in\mathbb{R}$ with probability at least $1-\delta$. By the previous step, we can take
\begin{align*}
\|g-g^*\|_{\infty}&\le\frac{\epsilon}{8} \\
\|h_y-h_y^*\|_{\infty}&\le\frac{\epsilon^2}{16}
\end{align*}
each with probability at least $1-\frac{\delta}{2}$, so by a union bound, both these inequalities hold with probability at least $1-\delta$.

We consider two cases. First, suppose that $g^*(t)\le\frac{\epsilon}{4}$, in which case
\begin{align*}
g(t)\le g^*(t)+\frac{\epsilon}{8}\le\frac{\epsilon}{2}.
\end{align*}
Then, since $h_y^*(t)\le g^*(t)$ and $h_y(t)\le g(t)$, we have
\begin{align*}
\left|\frac{h_y^*(t)^2}{g^*(t)}-\frac{h_y(t)^2}{g(t)}\right|&\le\left|\frac{h_y^*(t)^2}{g^*(t)}\right|+\left|\frac{h_y(t)^2}{g(t)}\right| \\
&\le|g^*(t)|+|g(t)| \\
&\le\epsilon.
\end{align*}
One detail is that when $g^*(t)=0$, then $H^*(t)$ is not well-defined. Defining $H^*(t)=0$ in this case is standard practice, since $h_y^*(t)\le g^*(t)$, so
\begin{align*}
H^*(t)=\frac{h_y^*(t)^2}{g^*(t)}\le\frac{g^*(t)^2}{g^*(t)}\le g^*(t)=0.
\end{align*}
Similarly, we define $H(t)=0$ if $g(t)=0$. In either case, the above argument still applies.

Second, suppose that $g^*(t)\ge\frac{\epsilon}{4}$, in which case
\begin{align*}
g(t)\ge g^*(t)-\frac{\epsilon}{8}\ge\frac{\epsilon}{8}.
\end{align*}
Then, we have
\begin{align*}
\left|\frac{h_y^*(t)^2}{g^*(t)}-\frac{h_y(t)^2}{g(t)}\right|&\le\frac{8}{\epsilon}\cdot|h_y^*(t)^2-h_y(t)^2| \\
&=\frac{8}{\epsilon}\cdot|h_y^*(t)-h_y(t)|\cdot|h_y^*(t)+h_y(t)| \\
&\le\frac{8}{\epsilon}\cdot\frac{\epsilon^2}{16}\cdot2 \\
&\le\epsilon.
\end{align*}

In either case, the claim follows, completing the proof.

\end{proof}

Next, we prove that the estimation error in Lemma~\ref{lem:gini} goes to zero.

\begin{lemma}
\label{lem:estimation}
\rm
Let $\mathcal{P}$ be a probability distribution over $\mathbb{R}$, let $p(x)$ be the probability density function for $\mathcal{P}$, let $F(x)$ be the cumulative distribution function for $\mathcal{P}$, let $\alpha:\mathbb{R}\to[0,1]$ be an arbitrary function, let
\begin{align*}
g^*(t)=\int\alpha(x)\cdot\mathbb{I}[x\le t]\cdot p(x)dx,
\end{align*}
and let $x^{(1)},...,x^{(n)}$ be i.i.d. random samples from $\mathcal{P}$, and let
\begin{align*}
g(t)=\frac{1}{n}\sum_{j=1}^n\alpha(x^{(j)})\cdot\mathbb{I}[x^{(j)}\le t]
\end{align*}
be the empirical estimate of $g^*$ on these samples. Then, we have
\begin{align*}
\text{Pr}_{x^{(1)},...,x^{(n)}\sim\mathcal{P}}\left[\|g-g^*\|_{\infty}\ge\frac{4\log n}{\sqrt{n}}\right]\le\frac{2}{n^{3/2}},
\end{align*}
for sufficiently large $n$.
\end{lemma}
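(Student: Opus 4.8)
The plan is to upgrade the pointwise concentration of a bounded-range empirical mean into a bound uniform over the threshold $t$, exploiting the fact that both $g$ and $\hat g$ are monotone in $t$. First I would pass to a normalized statement: writing $Z=\int p(x)\,dx$ and $\bar p=p/Z$ for the associated probability density, we have $g(t)=Z\cdot G(t)$ and $\hat g(t)=Z\cdot\hat G(t)$, where $G(t)=\int\alpha(x)\,\mathbb{I}[x_i\le t]\,\bar p(x)\,dx$ and $\hat G(t)=\tfrac1n\sum_{k=1}^n\alpha(x^{(k)})\,\mathbb{I}[x_i^{(k)}\le t]$. In every application of this lemma $p$ is obtained from a probability density by restriction to a region, so $Z\le1$ and it suffices to bound $\sup_t|G(t)-\hat G(t)|$. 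Both $G$ and $\hat G$ are nondecreasing in $t$ (because $\alpha\ge0$), with $G(-\infty)=\hat G(-\infty)=0$ and $G(+\infty)=\int\alpha(x)\bar p(x)\,dx\le1$; moreover $G$ is continuous as a consequence of Assumption~\ref{assump:finite}.

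Next I would discretize the threshold. Fix $\delta=\log n/\sqrt n$ and choose breakpoints $-\infty=t_0<t_1<\cdots<t_M=+\infty$, with $M\le\lceil 1/\delta\rceil$, such that $G(t_j)-G(t_{j-1})\le\delta$ for every $j$; this is possible by monotonicity and continuity of $G$. For each fixed $j$, the quantities $\alpha(x^{(k)})\,\mathbb{I}[x_i^{(k)}\le t_j]$ are i.i.d.\ and lie in $[0,1]$, so Hoeffding's inequality gives $\text{Pr}[\,|\hat G(t_j)-G(t_j)|\ge s\,]\le 2e^{-2ns^2}$ for any $s>0$; a union bound over the at most $\lceil 1/\delta\rceil$ nontrivial breakpoints yields
\begin{align*}
\text{Pr}\Big[\exists j:\ |\hat G(t_j)-G(t_j)|\ge s\Big]\ \le\ 2\lceil 1/\delta\rceil\,e^{-2ns^2}.
\end{align*}
On the complementary event, for any $t$ with $t_{j-1}\le t<t_j$, monotonicity of $\hat G$ and of $G$ gives $\hat G(t)\le\hat G(t_j)\le G(t_j)+s\le G(t_{j-1})+\delta+s\le G(t)+\delta+s$ and, symmetrically, $\hat G(t)\ge\hat G(t_{j-1})\ge G(t_{j-1})-s\ge G(t_j)-\delta-s\ge G(t)-\delta-s$, so $\sup_t|\hat G(t)-G(t)|\le\delta+s$.

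Finally I would pick $s$ so that the failure probability is at most $2n^{-3/2}$ while $\delta+s\le 4\log n/\sqrt n$. Taking $s=\sqrt{(3\log n)/(2n)}$ makes $2\lceil 1/\delta\rceil e^{-2ns^2}\le 2\lceil\sqrt n/\log n\rceil\,n^{-3}\le 2n^{-3/2}$ for $n\ge3$, and $\delta+s=\tfrac1{\sqrt n}\big(\log n+\sqrt{(3/2)\log n}\big)\le 4\log n/\sqrt n$ since $\log n\ge\log 3>1$; multiplying back by $Z\le1$ gives $\|g-\hat g\|_\infty\le\delta+s\le 4\log n/\sqrt n$ on this event, which is the claim. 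The only genuinely delicate point is the passage from a pointwise Hoeffding bound to uniformity over the continuum of thresholds: the entire $\log n$ factor in $4\log n/\sqrt n$ is the combined price of the union bound over the $\Theta(\sqrt n/\log n)$ bracketing thresholds and the $\delta$ slack between consecutive ones. The monotone structure of $g$ and $\hat g$ --- a weighted analogue of the empirical-CDF setting governed by the Dvoretzky--Kiefer--Wolfowitz inequality --- is exactly what makes this step elementary and keeps all constants explicit, and the small values of $n$ cause no trouble because the target constant $4$ is deliberately generous.
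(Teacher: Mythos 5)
Your proof is correct, and it takes a genuinely different (and somewhat cleaner) route than the paper's. The paper also discretizes the threshold into roughly $\sqrt{n}$ cells, but it slices according to the marginal cumulative distribution $F_i$ and then controls $\sup_{t\in I_j}$ via a \emph{three}-term decomposition $|g(t)-g(t_j)|+|g(t_j)-\hat{g}(t_j)|+|\hat{g}(t_j)-\hat{g}(t)|$: the first term is bounded by the $F_i$-mass of the cell, the second by Hoeffding at the grid points, and the third by a second, separate Hoeffding bound showing that each cell contains at most about $\sqrt{n}\log n$ of the sample points (each contributing $1/n$ to $\hat{g}$), with a second union bound over cells. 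Your monotone sandwich $\hat G(t_{j-1})\le\hat G(t)\le\hat G(t_j)$ combined with $G(t_j)-G(t_{j-1})\le\delta$ eliminates that third step entirely and needs only one Hoeffding application per breakpoint, which is why your constants work out with room to spare; it is exactly the classical DKW/Glivenko--Cantelli bracketing, with the grid built from the level sets of $G$ itself rather than of $F_i$ (both require the continuity supplied by Assumption~\ref{assump:finite}). The one point you flag as an assumption --- $Z\le 1$ --- is not in the lemma statement, but the paper's own proof relies on it just as silently (it asserts that $Z\cdot\alpha(x^{(k)})\cdot\mathbb{I}[x_i^{(k)}\le t]$ lies in $[0,1]$), and it does hold in every invocation since the unnormalized densities there arise by restricting a normalized density to a region; making that reliance explicit, as you do, is if anything an improvement. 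So: same concentration tool and same $\Theta(\sqrt n)$-scale discretization, but your exploitation of joint monotonicity of $g$ and $\hat g$ trades the paper's per-cell sample-count argument for a shorter, sharper uniformization step.
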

\begin{proof}
First, we define points $t_0,t_1,...,t_{\sqrt{n}}\in\mathbb{R}$ that divide $\mathbb{R}$ into $\sqrt{n}$ intervals according to the cumulative distribution function $F(x)$ (for convenience, we assume $n$ is a perfect square). In particular, we choose $t_i$ to satisfy
\begin{align*}
t_i\in F^{-1}\left(\frac{i}{\sqrt{n}}\right).
\end{align*}
For convenience, we choose $t_0=-\infty$ and $t_{\sqrt{n}}=\infty$, which satisfy the condition. Now, for each $i\in[\sqrt{n}]$, let $I_i=(t_{i-1},t_i]$. Note that these intervals cover $\mathbb{R}$, i.e., $\mathbb{R}=I_1\cup...\cup I_{\sqrt{n}}$.

Then, we can decompose the quantity $\|g-g^*\|_{\infty}$ into three parts:
\begin{align*}
\|g-g^*\|_{\infty}&=\sup_{t\in\mathbb{R}}|g(t)-g^*(t)| \\
&=\sup_{i\in[\sqrt{n}]}\sup_{t\in I_i}|g(t)-g^*(t)| \\
&\le\sup_{i\in[\sqrt{n}]}\sup_{t\in I_i}\left\{|g(t)-g(t_i)|+|g(t_i)-g^*(t_i)|+|g^*(t_i)-g^*(t)|\right\} \\
&\le\sup_{i\in[\sqrt{n}]}\sup_{t\in I_i}|g(t)-g(t_i)|+\sup_{i\in[\sqrt{n}]}|g(t_i)-g^*(t_i)|+\sup_{i\in[\sqrt{n}]}\sup_{t\in I_i}|g^*(t_i)-g^*(t)|.
\end{align*}
We show that each of these three parts can be made arbitrarily small with high probability by taking $n$ sufficiently large.

\paragraph{First term.}

We first show that for every $i\in[\sqrt{n}]$, the interval $I_i$ contains at most $n^{1/2}\log n$ of the points $x^{(1)},...,x^{(n)}$ with high probability. By the definition of the points $t_i$, the probability that a single randomly selected point $x^{(j)}$ falls in $I_i$ is $n^{-1/2}$ (since the points $t_i$ were constructed according to the cumulative distribution function $F$):
\begin{align*}
M=\mathbb{E}_{x\sim\mathcal{P}}\left[\mathbb{I}[x\in I_i]\right]=\text{Pr}_{x\sim\mathcal{P}}[x\in I_i]=\frac{1}{\sqrt{n}}.
\end{align*}
Then, the fraction of the $n$ points $x^{(j)}$ that fall in the interval $I_i$ is
\begin{align*}
\hat{M}=\frac{1}{n}\sum_{j=1}^n\mathbb{I}[x^{(j)}\in I_i].
\end{align*}
Note that each $\mathbb{I}[x^{(j)}\in I_i]$ is an random variable in $[0,1]$, so by Hoeffding's inequality, we have
\begin{align*}
\text{Pr}_{x^{(1)},...,x^{(n)}\sim\mathcal{P}}\left[|\hat{M}-M|\ge\frac{2\log n}{\sqrt{n}}\right]\le2e^{-2(\log n)^2}\le\frac{1}{n^2}
\end{align*}
for sufficiently large $n$. Now, note that each point $x^{(j)}$ in $I_i$ can increase the value of $|g(t)-g(t_i)|$ by at most $n^{-1}$. Since there are $n\cdot\hat{M}$ points $x^{(j)}$ in $I_i$, the total increase is bounded by $\hat{M}$, i.e.,
\begin{align*}
\text{Pr}_{x^{(1)},...,x^{(n)}\sim\mathcal{P}}\left[\sup_{t\in I_i}|g(t)-g(t_i)|\ge\frac{2\log n}{\sqrt{n}}\right]\le\frac{1}{n^2}.
\end{align*}
By a union bound, this inequality holds for every $i\in[\sqrt{n}]$ with probability $n^{-3/2}$.

\paragraph{Second term.}

Note that each $\alpha(x^{(j)})\cdot\mathbb{I}[x^{(j)}\ge t]$ is a random variable in $[0,1]$. Therefore, by the Hoeffding inequality, we have
\begin{align*}
\text{Pr}_{x^{(1)},...,x^{(n)}\sim\mathcal{P}}\left[|g(t_i)-g^*(t_i)|\ge\frac{\log n}{\sqrt{n}}\right]\le2e^{-2(\log n)^2}\le\frac{1}{n^2}
\end{align*}
for sufficiently large $n$. By a union bound, this inequality holds for every $i\in[\sqrt{n}]$ with probability $n^{-3/2}$.

\paragraph{Third term.}

Since $t\le t_i$, we have $\mathbb{I}[x\le t]\le\mathbb{I}[x\le t_i]$. Thus, for all $t\in I_i$, we have
\begin{align*}
|g^*(t_i)-g^*(t)|
&=\left|\int\alpha(x)\cdot(\mathbb{I}[x\le t_i]-\mathbb{I}[x\le t])\cdot p(x)dx\right| \\
&\le\int(\mathbb{I}[x\le t_i]-\mathbb{I}[x\le t])\cdot p(x)dx \\
&=F(t_i)-F(t) \\
&\le n^{-1/2},
\end{align*}
where the last inequality follows from the definition of $t_i$ and the fact that $t\in I_i$.

\paragraph{Combined bound.}

Putting the three results together, we can conclude that for sufficiently large $n$, we have
\begin{align*}
\text{Pr}_{x^{(1)},...,x^{(n)}\sim\mathcal{P}}\left[\|g-g^*\|_{\infty}\ge\frac{4\log n}{\sqrt{n}}\right]\le\frac{2}{n^{3/2}},
\end{align*}
as claimed.
\end{proof}

\subsection{Proof of Regularity of the Gain Function}

In this section, we prove that the gain function $G^*$ satisfies certain regularity conditions.

\begin{lemma}
\label{lem:ginigap}
The function $G^*:\mathbb{R}\to\mathbb{R}$ is continuous and has bounded support.
\end{lemma}
\begin{proof}
It is clear that $G^*$ is continuous. To see that $G^*$ has bounded support, recall that $p(x)$ has bounded support, i.e., $p(x)=0$ for $|x|>x_{\text{max}}$. Then, note that if $s>x_{\text{max}}$, we have
\begin{align*}
\text{Pr}_{x\sim\mathcal{P}}[C_{N^*}\wedge(x\le s)]&=\text{Pr}_{x\sim\mathcal{P}}[C_{N^*}] \\
\text{Pr}_{x\sim\mathcal{P}}[C_{N^*}\wedge(x>s)]&=0 \\
\text{Pr}_{x\sim\mathcal{P}}[f(x)=y\mid C_{N^*}\wedge(x\le s)]&=\text{Pr}_{x\sim\mathcal{P}}[f(x)=y\mid C_{N^*}] \\
\text{Pr}_{x\sim\mathcal{P}}[f(x)=y\mid C_{N^*}\wedge(x>s)]&=0
\end{align*}
Therefore, we have
\begin{align*}
G^*(s)=-H^*(f,C_{N^*}\wedge(x\le s))-H(f,C_{N^*}\wedge(x>s))+H(f,C_{N^*})=0.
\end{align*}
By a similar argument, $G^*(s)=0$ for $s<-x_{\text{max}}$, so the claim follows.
\end{proof}

\section{User Study Interface}
\label{sec:userstudyappendix}

We include images of our user study on the following pages. Where applicable, the marked answers are the correct ones. Within each part, we randomized the order in which the decision tree and the rule list or decision set appeared.

\clearpage

\includegraphics[width=\textwidth]{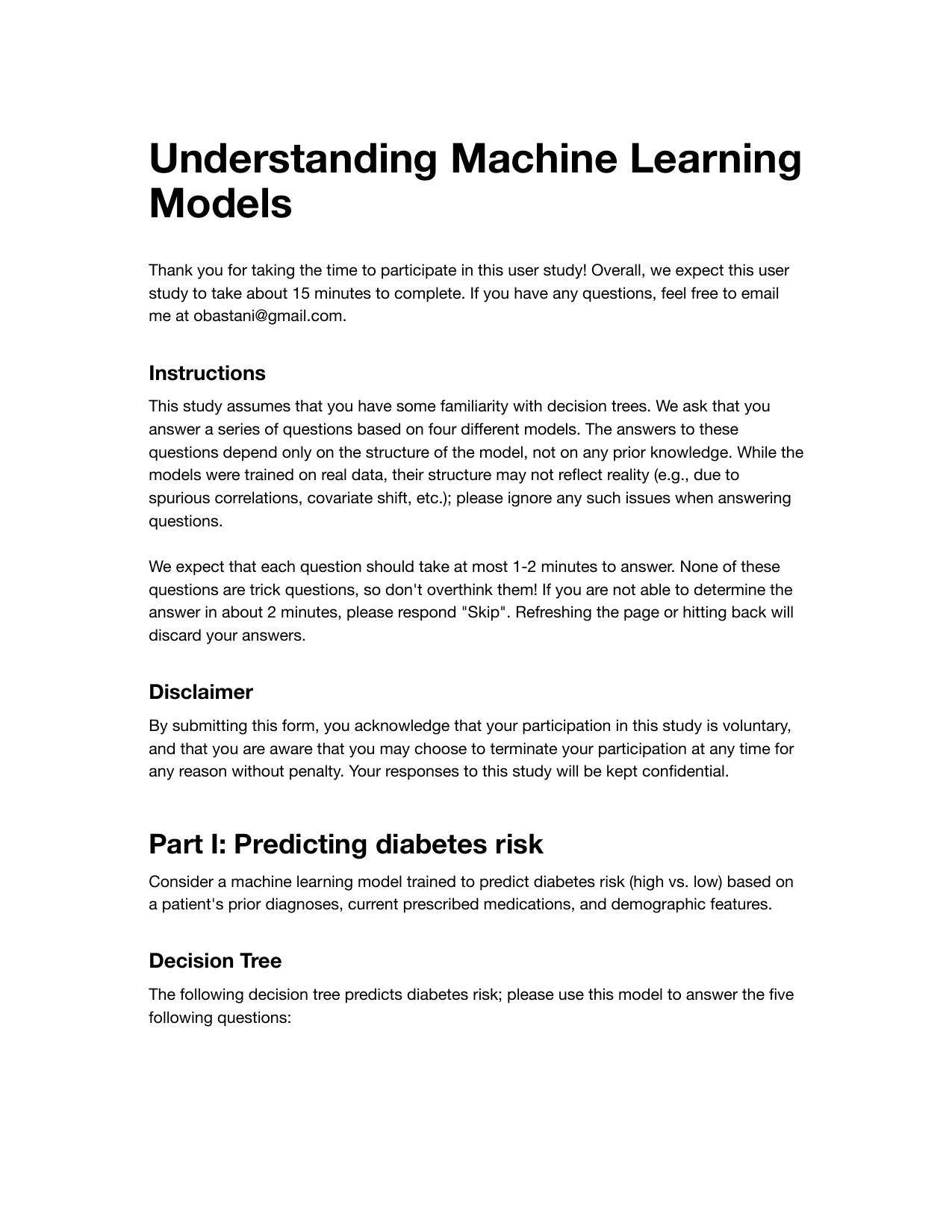} \\

\includegraphics[width=\textwidth]{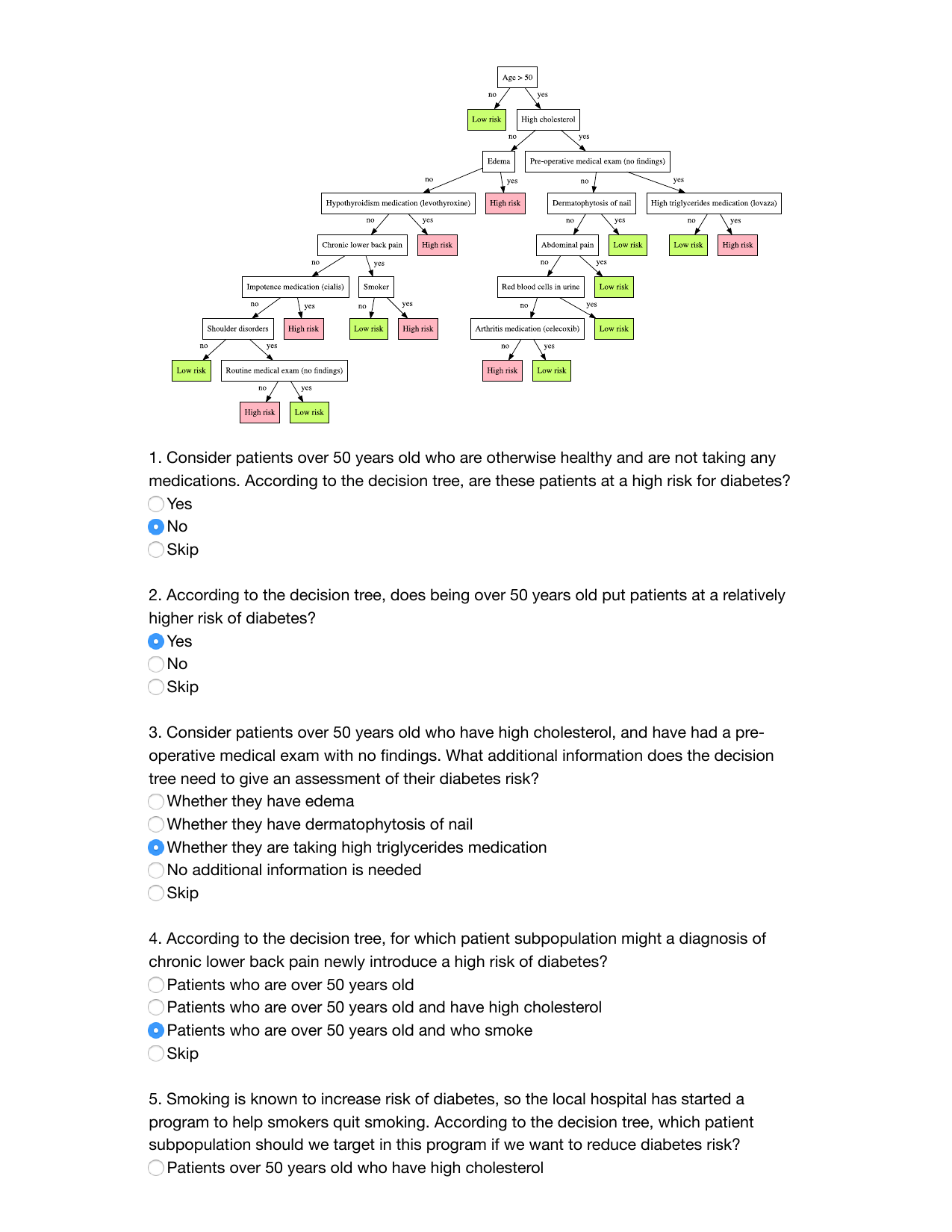} \\

\includegraphics[width=\textwidth]{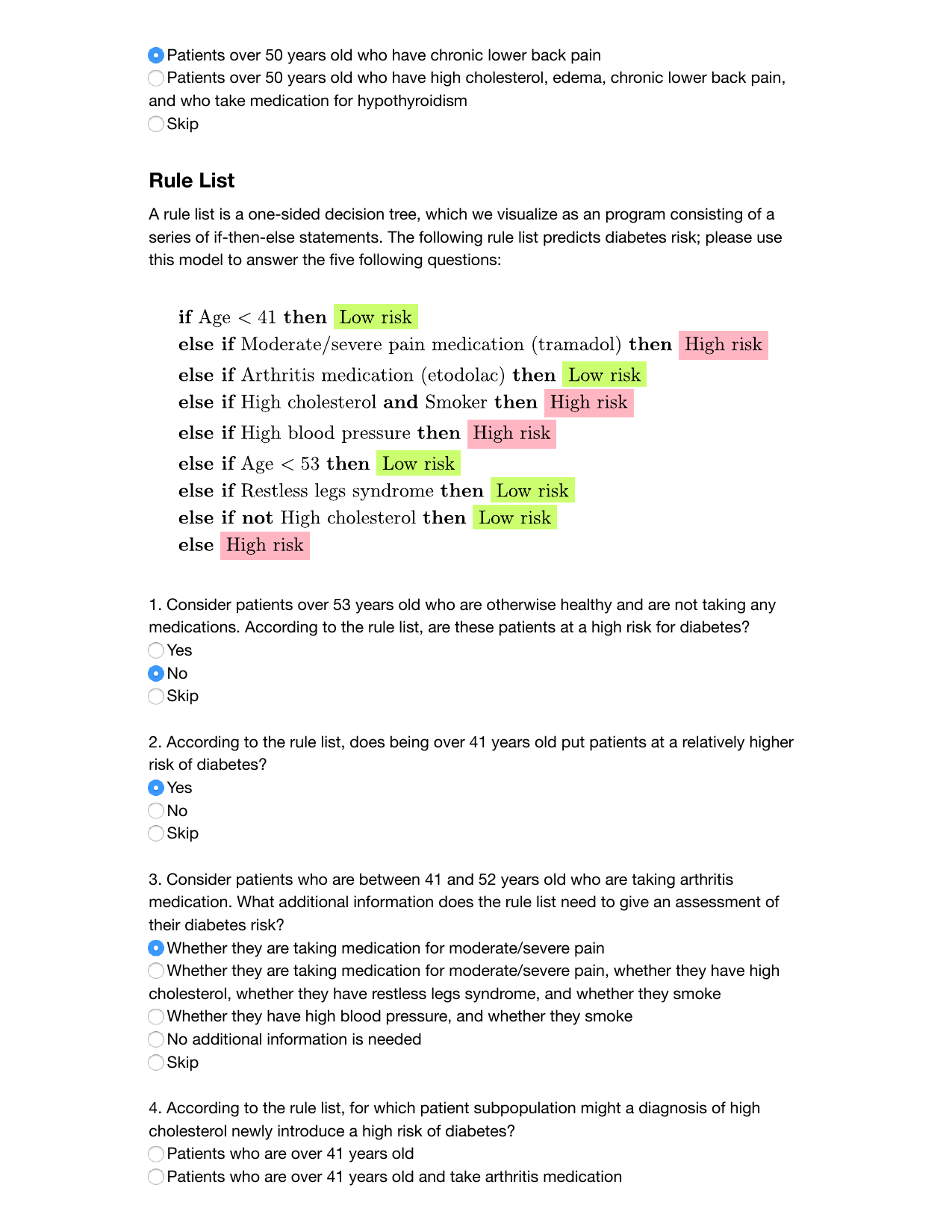} \\

\includegraphics[width=\textwidth]{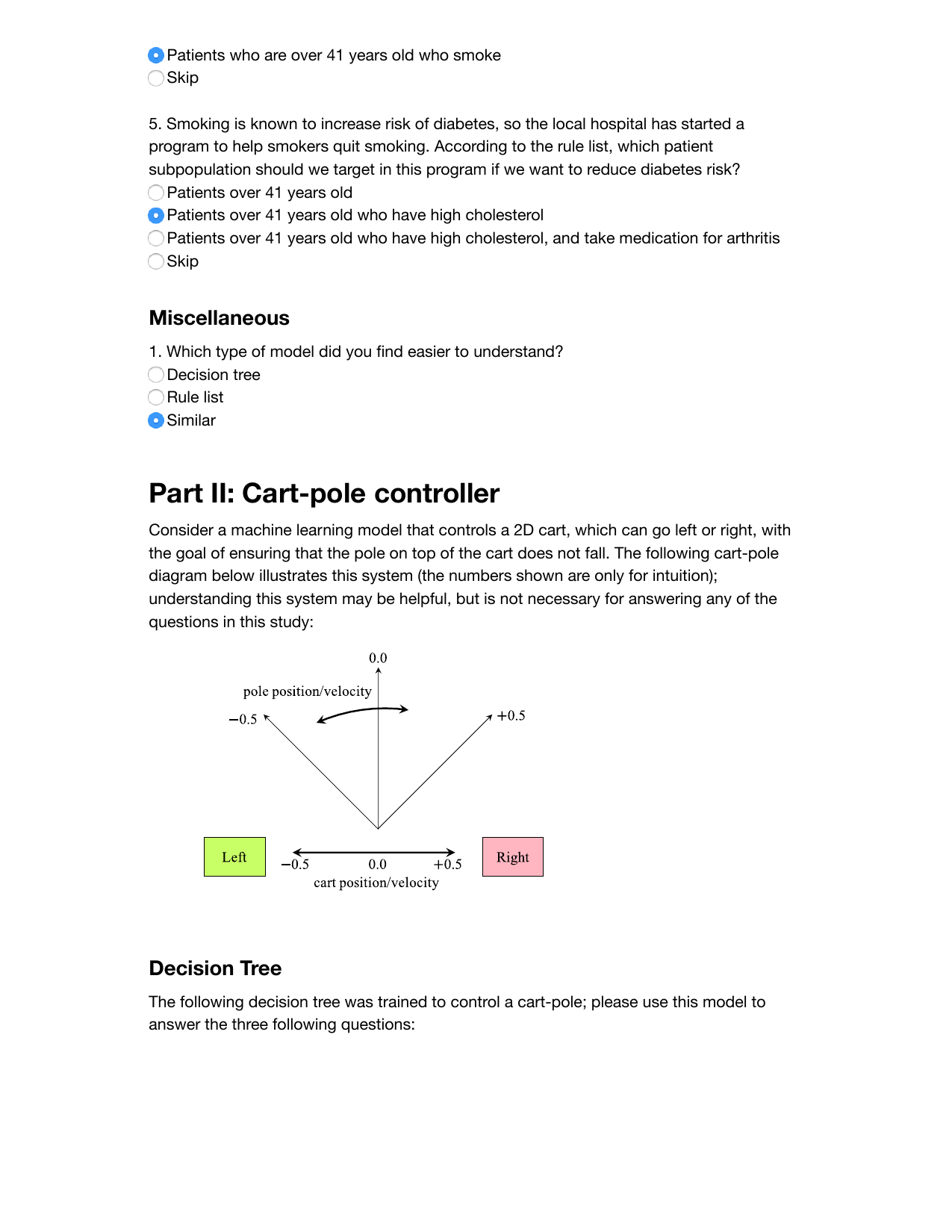} \\

\includegraphics[width=\textwidth]{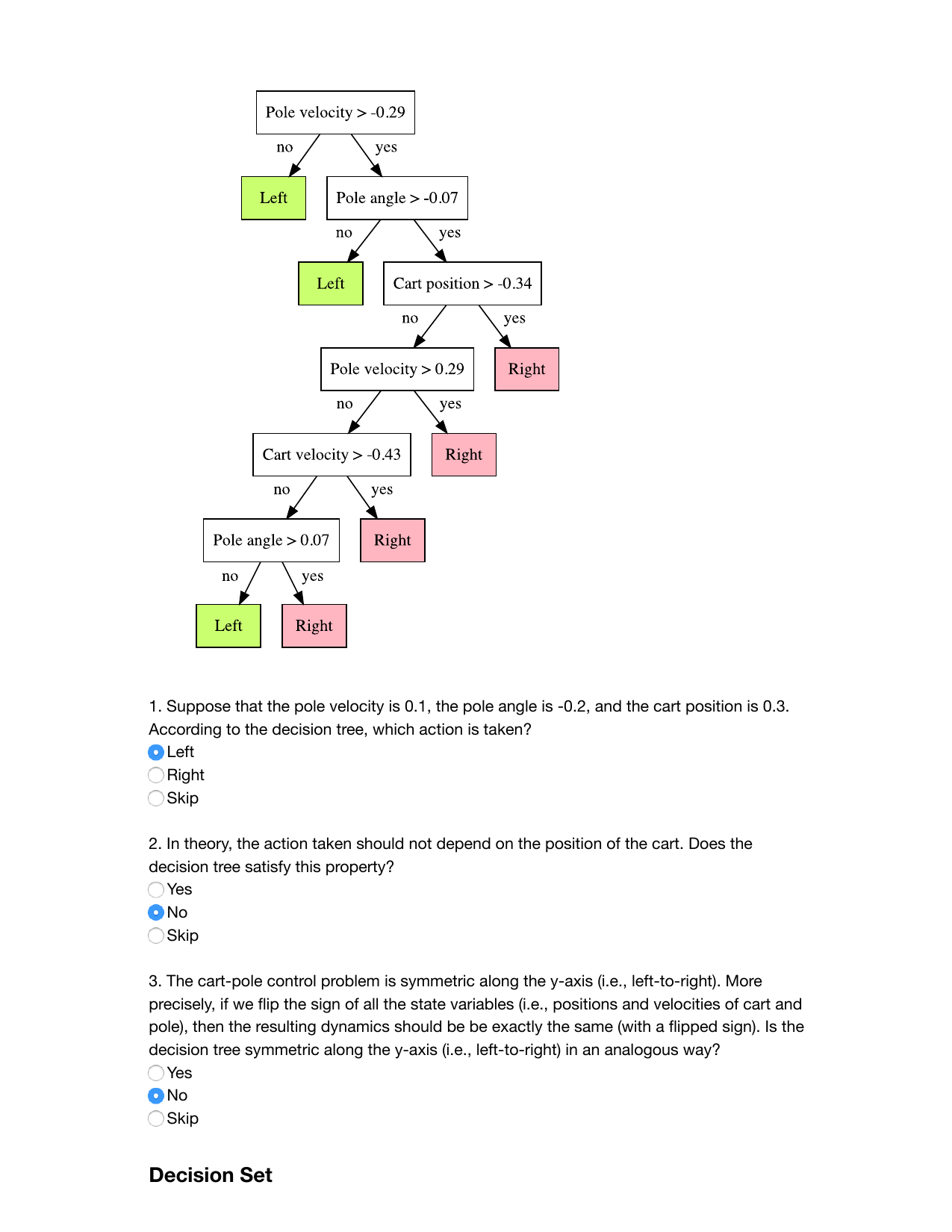} \\

\includegraphics[width=\textwidth]{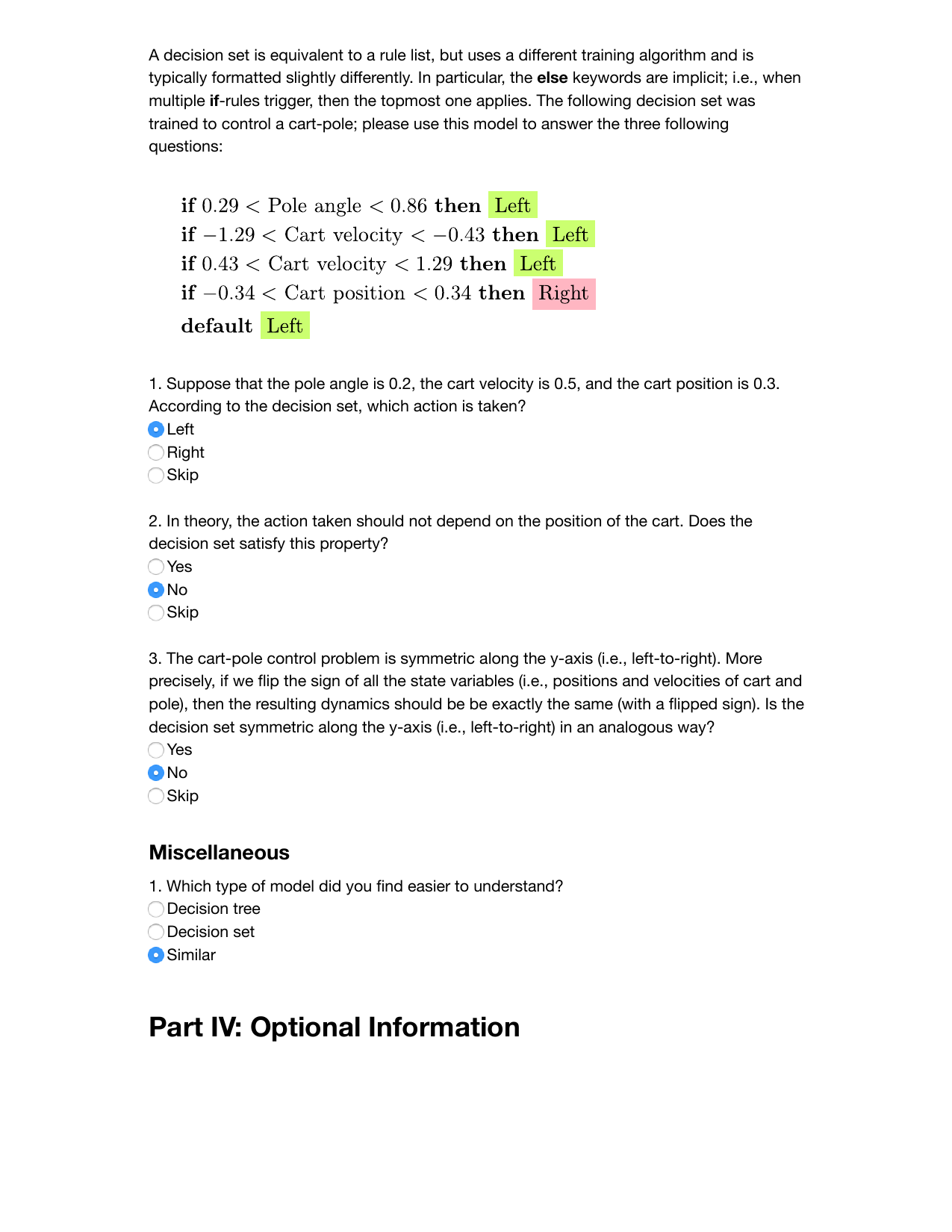} \\

\includegraphics[width=\textwidth]{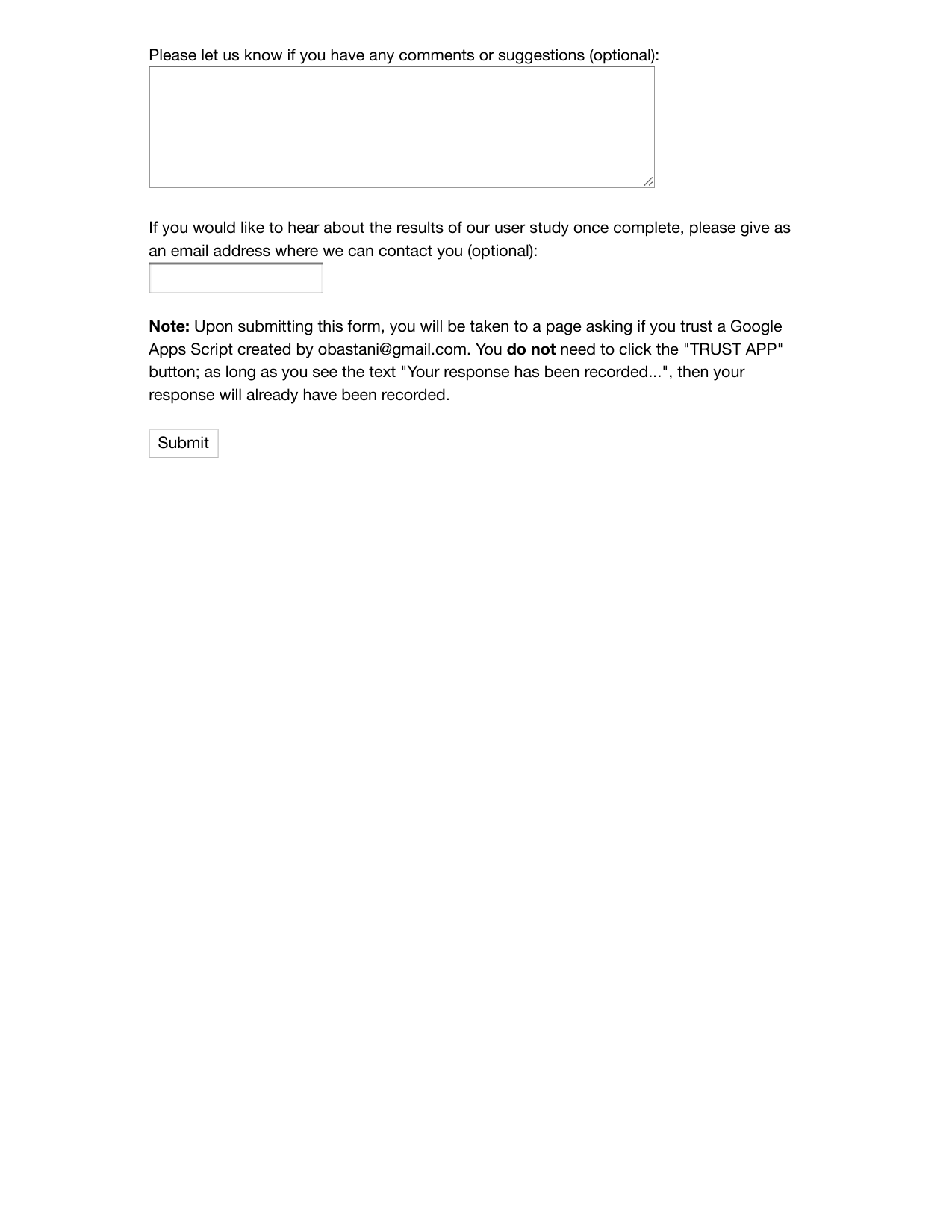} \\

\end{document}